\theoremstyle{plain}
\newtheorem{theorem}{Theorem}[section]
\newtheorem{lemma}[theorem]{Lemma}
\theoremstyle{definition}
\newtheorem{definition}[theorem]{Definition}
\newtheorem{assumption}[theorem]{Assumption}
\theoremstyle{remark}
\icmltitlerunning{PowerGenie: Analytically-Guided Evolutionary Discovery of Superior Reconfigurable Power Converters}
\begin{document}

\twocolumn[
  \icmltitle{PowerGenie: Analytically-Guided Evolutionary \\ Discovery of Superior Reconfigurable Power Converters}



  \icmlsetsymbol{equal}{*}

  \begin{icmlauthorlist}
    \icmlauthor{Jian Gao}{NEU}
    \icmlauthor{Yiwei Zou}{RICE}
    \icmlauthor{Abhishek Pradhan}{NEU}
    \icmlauthor{Wenhao Huang}{RICE}
    \icmlauthor{Yumin Su}{RICE}
    \icmlauthor{Kaiyuan Yang}{RICE}
    \icmlauthor{Xuan Zhang}{NEU}
  \end{icmlauthorlist}

    \icmlaffiliation{NEU}{Department of Electrical and Computer Engineering, Northeastern University, Boston, MA, USA}
    \icmlaffiliation{RICE}{Department of Electrical and Computer Engineering, Rice University, Houston, TX, USA}

    \icmlcorrespondingauthor{Xuan Zhang}{xuan.zhang@northeastern.edu}

  \icmlkeywords{Machine Learning, ICML}

  \vskip 0.3in
]



\printAffiliationsAndNotice{}  

\begin{abstract}
Discovering superior circuit topologies requires navigating an exponentially large design space—a challenge traditionally reserved for human experts. Existing AI methods either select from predefined templates or generate novel topologies at a limited scale without rigorous verification, leaving large-scale performance-driven discovery underexplored. We present PowerGenie, a framework for automated discovery of higher-performance reconfigurable power converters at scale. PowerGenie introduces: (1) an automated analytical framework that determines converter functionality and theoretical performance limits without component sizing or SPICE simulation, and (2) an evolutionary finetuning method that co-evolves a generative model with its training distribution through fitness selection and uniqueness verification. Unlike existing methods that suffer from mode collapse and overfitting, our approach achieves higher syntax validity, function validity, novelty rate, and figure-of-merit (FoM). PowerGenie discovers a novel 8-mode reconfigurable converter with 23\% higher FoM than the best training topology. SPICE simulations confirm average absolute efficiency gains of 10\% across 8 modes and up to 17\% at a single mode. Code will be released upon publication.
\end{abstract}

\section{Introduction}
\label{sec: Intro}
\begin{figure}[!t]
\begin{center}
\includegraphics[width=0.85\linewidth]{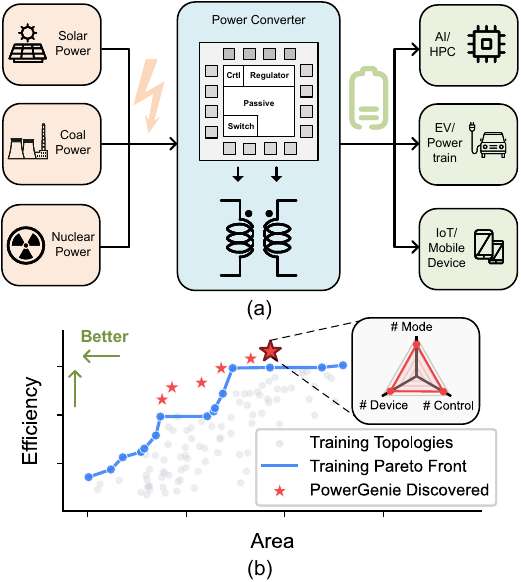}
\caption{(a) Power converters bridge energy sources to applications spanning AI infrastructure, EVs, and mobile devices. (b) PowerGenie discovers topologies beyond the training Pareto front.}
\label{fig: highlevel}
\end{center}
\end{figure}
Artificial intelligence (AI) has fundamentally transformed how scientists and engineers tackle complex problems, achieving superhuman performance in protein structure prediction~\citep{jumper2021highly}, chip placement~\citep{mirhoseini2021graph}, weather forecasting~\citep{lam2023learning}, and mathematical theorem proving~\citep{trinh2024solving}. More recently, AlphaEvolve~\citep{novikov2025alphaevolve} demonstrated that combining large language models with evolutionary search can discover algorithms that surpass decades of human-engineered solutions, including improved matrix multiplication routines and more efficient sorting networks. Inspired by these breakthroughs, analog circuit topology discovery has emerged as a compelling new frontier for AI-driven design automation~\citep{dong2023cktgnn,anonymous2024analoggenie,11240792,hu2025adreamdco,karahan2024deep}. Circuit topology design presents a uniquely challenging problem: it demands not only navigating an exponentially large design space but also uncover novel designs with performance exceeding existing designs. However, current approaches largely focus on either selecting from predefined topology templates or generating novel structures at a limited scale without rigorously verifying their functionality and performance. This leaves a critical question unanswered: \textit{can AI learn from existing circuits and evolve to discover superior designs with higher performance?}

Two key challenges impede large-scale performance-driven circuit topology discovery. The first is the prohibitive cost of sizing and evaluation. Since a circuit's functionality and performance cannot be determined from topology alone, each candidate design requires sizing optimization followed by SPICE simulation—dramatically complicating the overall optimization flow. Although recent work has sought to accelerate sizing~\citep{cao2024rose, budak2021dnn, wang2020gcn, lyu2017efficient, shahane2023graph, krylov2023learning} and evaluation~\citep{poddar2025insight, yang2025zerosim, fan2024graph}, these empirical methods generalize poorly to unseen topologies and struggle to accurately determine theoretical performance limits. The second challenge lies in developing a finetuning strategy that simultaneously enhances generation quality and preserves creative diversity. Alignment is critical for transforming foundation models into specialized agents, yet standard finetuning often leads to mode collapse and loss of diversity~\citep{ramasesh2021effect,kotha2023understanding}. One promising direction involves selecting high-influence samples while iteratively refining dataset diversity. While most prior work finetunes on fixed datasets with various selection or curriculum strategies~\citep{zhou2023lima,xia2024less,li2024quantity, sener2017active, ash2019deep, mirzasoleiman2020coresets, zhou2020curriculum, rafailov2023direct}, recent advances in self-improvement~\citep{wang2023self,shumailov2024ai,xu2024wizardlm,zelikman2022star,schulman2017proximal}—where models expand training sets using their own generations—have shown promise in data-scarce domains. However, these approaches risk model collapse when noise accumulates in synthetic data across iterations.
Analog circuit topology discovery, as an inherently data-scarce problem, demands a more robust closed-loop solution: one that co-evolves both the training set and the model's discovery capability through careful selection during finetuning.

To address these challenges, we propose PowerGenie, a unified framework for large-scale superior power converter topology discovery. PowerGenie introduces a graph-based analytical framework that automatically determines any switched-capacitor (SC) converter's functionality and theoretical performance limits—including slow-switching limit (SSL) and fast-switching limit (FSL) impedances—without requiring component sizing or process design kit (PDK) parameters. While these metrics are standard for comparing converter topologies~\citep{seeman2008analysis,jiang2017digital,karadi20144,jung201612}, prior analyses relied entirely on manual derivation; PowerGenie is the first fully automated framework for arbitrary SC topologies. The framework transforms circuit netlists into directed graphs, constructs fundamental loop matrices via spanning tree decomposition, and extracts voltage conversion ratios and charge multipliers using Tellegen's Theorem~\citep{tellegen1952general}. Building on this foundation, PowerGenie introduces evolutionary finetuning that formulates topology discovery as population-based optimization. Each generation, elite selection preserves top-performing circuits while tournament selection~\citep{miller1995genetic} maintains diversity; the finetuned model serves as a learned variation operator, generating candidates that undergo functional verification and uniqueness checking via graph isomorphism. This establishes a co-evolutionary dynamic: the model learns to generate higher-fitness circuits while the population accumulates superior designs that further refine generative capabilities. Our key contributions are as follows:

\begin{itemize}
\item \textbf{Automated Analytical Framework.} We introduce the first automated analytical framework for SC power converters that determines functionality and theoretical performance limits in seconds rather than hour-long SPICE optimization for rapid performance evaluation.

\item \textbf{Evolutionary Finetuning.} We propose an evolutionary framework that co-evolves the generative model and training distribution through fitness-based selection and uniqueness verification, continuously improving generation quality while preserving diversity.

\item \textbf{Superior Generation Quality.} Evolutionary finetuning improves syntax validity by 41.9\%, functional validity by 45.1\%, novelty rate by 28.6\%, and figure-of-merit (FoM) by 14\% over the best alignment baseline.

\item \textbf{Novel High-Performance Topology.} PowerGenie discovers a novel 8-mode reconfigurable converter with 23\% higher FoM—combining SSL metric, FSL metric, and passive component count—than the best topology in the dataset. SPICE validation confirms average absolute efficiency gains of 10\% across all modes, reaching 17\% gain at individual conversion ratios.
\end{itemize}
\section{Preliminaries and Related Work}
\label{sec: Related}
\subsection{Power Converters: Fundamentals and Applications}
Power converters are essential electronic systems that transform electrical energy between different voltage levels and forms (AC/DC), enabling efficient power delivery from sources to loads. They employ switching devices and passive energy storage elements to transfer power; by periodically toggling switches and redistributing energy, converters regulate output voltage while minimizing losses. This voltage conversion is critical because energy sources and loads rarely share compatible voltage requirements. Power converters are ubiquitous in modern technology—every smartphone, electric vehicle, and AI data center relies on them to efficiently deliver power. Improvements in both efficiency and power density directly translate to longer battery life, extended EV driving range, and enabling more efficient data centers for AI training and inference.

\subsection{Reconfigurable SC Power Converter}
The demand for integrated power conversion with multiple voltage conversion ratios (VCRs) has grown rapidly due to the miniaturization of digital circuits. Modern multi-core processors employ dynamic voltage scaling to balance performance and efficiency~\citep{le2011design}, while IoT devices require on-chip conversion to accommodate wide voltage dynamics~\citep{10904683}. Reconfigurable SC DC-DC converters have emerged as attractive solutions, offering full integration with high efficiency and power density~\citep{sanders2012road}. However, designing multi-VCR converters is considerably more challenging than single-mode designs~\citep{dickson1976chip, lin2003topological}. As shown in Figure~\ref{fig: humandesign}, designers must merge multiple single-mode topologies and identify maximal common sub-topologies to maximize device reuse. Furthermore, since most switches operate across multiple VCR modes, the space of possible control configurations becomes highly complex: a two-phase, single-mode converter requires only 2-bit control configurations, but supporting additional modes causes exponential scaling—a two-phase, two-mode converter requires 4-bit control configurations specifying switch states across all modes and phases. This complexity restricts human experts to few-mode designs~\citep{le2013sub,el201393}, while many-mode solutions rely on digital synthesis with significant area overhead~\citep{10339894}. AI-driven methods offer a promising alternative for exploring the vast design space of many-mode converters.

\subsection{AI for Analog Circuit Topology Design}
Recent advancements in analog topology synthesis focus on assembling predefined circuit structures. Methods such as FALCON~\citep{mehradfar2025falcon}, AnalogCoder~\citep{lai2025analogcoder,lai2025analogcoderpro}, and Artisan~\citep{chen2024artisan,shen2025atelier} leverage expert-verified templates—via selection, code generation, or retrieval-augmented generation—but are fundamentally bounded by them. To transcend these limitations, generative approaches construct novel circuits from scratch: CktGNN~\citep{dong2023cktgnn} uses variational autoencoders to generate circuit graphs from functional subgraphs, LaMAGIC~\citep{chang2024lamagic,chang2025lamagic2} formulates generation as sequence-to-sequence modeling, and AnalogGenie~\citep{anonymous2024analoggenie,gaoanaloggenie} enables large-scale topology generation via Eulerian circuit representations. For performance-driven discovery, reinforcement learning gained traction: AUTOCIRCUIT-RL~\citep{vijayaraghavan2025autocircuit} refines LLM-generated netlists using proximal policy optimization (PPO) guided by proxy reward models, while EVA~\citep{gao2025eva} applies direct preference optimization (DPO) on performance-labeled topologies. However, each faces distinct bottlenecks: AUTOCIRCUIT-RL's reliance on large-scale offline SPICE simulations to train its reward models limits scalability, while EVA's training on fixed datasets leads to mode collapse. Addressing these challenges requires physics-based evaluation to bypass data-intensive training of reward models, and a dynamic training strategy that enriches the dataset while preserving diversity.

\begin{figure}[!t]
\begin{center}
\includegraphics[width=0.8\linewidth]{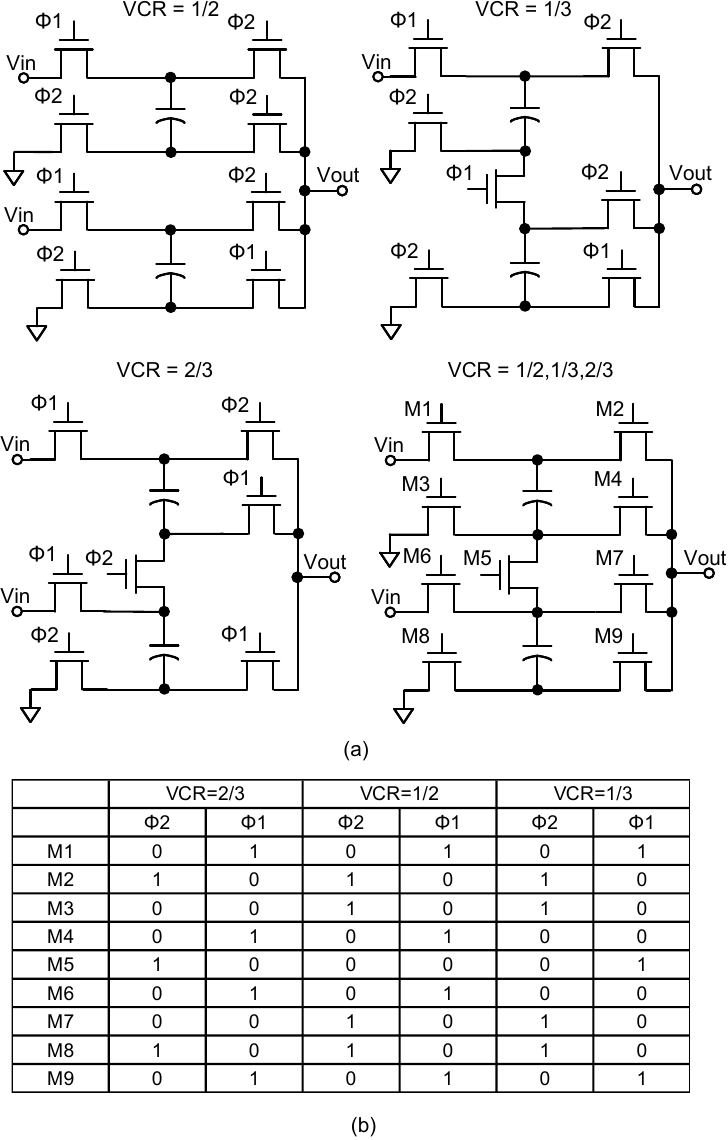}
\caption{(a) A 3-mode reconfigurable SC converter designed by human experts~\citep{le201032nm}, merging three independent single-mode topologies into a unified structure. (b) Corresponding 6-bit control configurations for each MOSFET across VCR modes and phases, where 1 indicates on and 0 indicates off.}
\label{fig: humandesign}
\end{center}
\end{figure}
\begin{figure*}[!t]
\begin{center}
\includegraphics[width=0.9\linewidth]{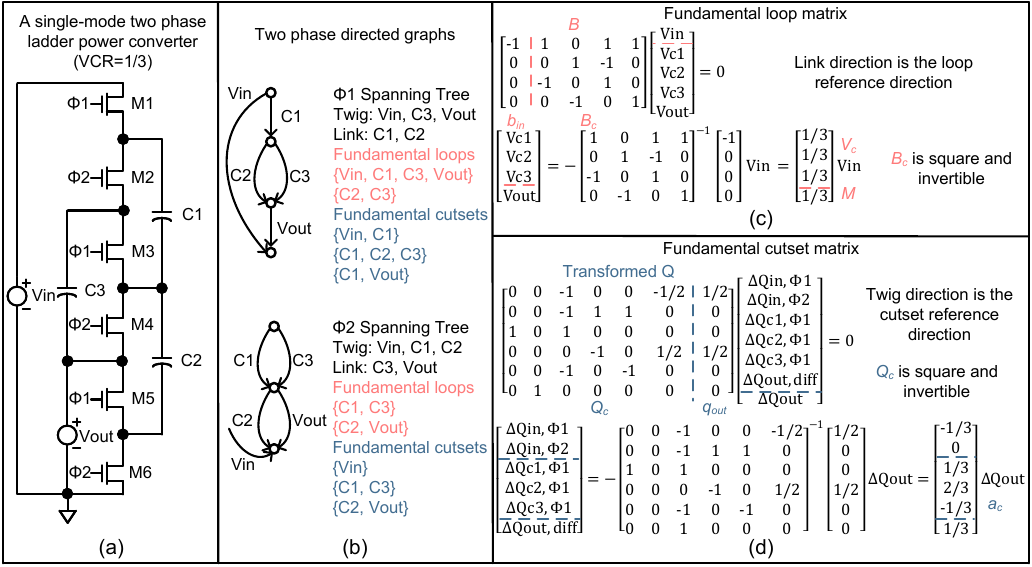}
\caption{Analytical framework demonstration using a single-mode two-phase converter (VCR=1/3). (a) Topology. (b) Phase networks with spanning trees, fundamental loops, and cutsets. (c) VCR and capacitor voltage extraction via KVL. (d) Charge multiplier extraction via KCL. Switch charge multipliers and blocking voltages follow similarly by including switches in the phase graphs (see Appendix~\ref{sec:Q_construction}). For reconfigurable converters, PowerGenie decouples the topology into single-mode sub-topologies and processes each independently.}
\label{fig: analytical}
\end{center}
\end{figure*}

\section{Approach}
\label{sec: Approach}
\subsection{Problem Formulation}
PowerGenie addresses the circuit topology discovery problem: given target voltage conversion ratios, discover novel reconfigurable power converter topologies—along with their control schemes—that achieve superior performance over existing training designs. We focus on reconfigurable converters due to their higher complexity than single-mode designs explored by existing AI methods~\citep{chang2025lamagic2}, and their relevance to power IC research~\citep{fu2025switched}. While typical human designs target 3 to 6 VCRs~\citep{jiang2017digital}, PowerGenie targets 8 VCRs (1/5, 1/4, 1/3, 1/2, 2/3, 3/4, 4/5, 1) to demonstrate scalability. PowerGenie relies on analytical evaluation. It characterizes theoretical performance limits using three metrics: (1) SSL, for capacitor-related losses; (2) FSL, for switch conduction losses; and (3) capacitor count, which dominates area. These combine into a unified FoM. Transistor-level SPICE simulations with gradient-based sizing validate actual performance.

\subsection{Automated Analytical Framework}
An efficient and accurate verifier is essential for performance-driven training and large-scale evaluation. Given a reconfigurable power converter netlist and its control scheme, PowerGenie first extracts the capacitor count, then decouples the topology into single-mode sub-topologies based on control schemes—an eight-mode topology is decomposed into 8 single-mode topologies, each processed independently. As shown in Figure~\ref{fig: analytical}, PowerGenie then applies the following steps to verify functional validity and compute SSL and FSL metrics for each sub-topology. A reconfigurable converter is functional only when all decoupled sub-topologies are valid, with overall SSL and FSL computed as averages across all modes. We first state the operating assumptions under which the analysis is performed.

\begin{assumption}[No-Load Steady State]
  \label{ass:noload}
  The VCR is determined with a DC voltage source $V_{IN}$ at the input, the output open-circuited, and the converter in periodic steady state such that capacitor voltages remain constant.
\end{assumption}

\begin{assumption}[Asymptotic Operating Limits]
  \label{ass:limits}
  SSL analysis assumes ideal switches with impulsive charge transfer; FSL analysis assumes constant capacitor voltages with resistive switch losses dominating.
\end{assumption}

\subsubsection{Determining Functional Validity}
PowerGenie first determines if a converter is properly posed, then extracts its VCR. Under Assumption~\ref{ass:noload}, we model the converter as a directed graph $G = (V, E)$, where nodes $V$ represent electrical connections and edges $E$ represent circuit branches (capacitors, switches, and voltage sources). The switch configurations partition $G$ into two phase networks, $G_{\phi_1}$ and $G_{\phi_2}$. For each phase network, we construct a spanning tree \citep{kruskal1956shortest}---a connected subgraph containing all nodes without forming loops. This partitions branches into \emph{twigs} (tree edges) and \emph{links} (non-tree edges). 

Each link, when added to the spanning tree, creates exactly one cycle called a \emph{fundamental loop}. Dually, each twig defines a \emph{fundamental cutset}---the minimal set of branches whose removal disconnects the graph. These structures enable construction of two matrices that encode Kirchhoff's laws~\citep{kirchhoff1845ueber}.

\begin{definition}[Fundamental Loop Matrix, \citep{chua1987linear}]
  \label{def:loopmatrix}
  The fundamental loop matrix $\mathbf{B} \in \mathbb{Z}^{\ell \times b}$ encodes Kirchhoff's Voltage Law (KVL), with entries $B_{i,j} \in \{-1, 0, +1\}$ indicating whether branch $j$ appears in fundamental loop $i$ and with what orientation. For two-phase converters, we stack the matrices from both phases: $\mathbf{B} = [\mathbf{B}^1; \mathbf{B}^2]$. Partitioning by columns as $\mathbf{B} = [\mathbf{b}_{in} \mid \mathbf{B}_c]$ separates the input source column $\mathbf{b}_{in}$ from the submatrix $\mathbf{B}_c$ for capacitors and output.
\end{definition}

\begin{definition}[Fundamental Cutset Matrix, \citep{chua1987linear}]
  \label{def:cutsetmatrix}
  The fundamental cutset matrix $\mathbf{Q} \in \mathbb{Z}^{(n-1) \times b}$ (derived in Appendix~\ref{sec:Q_construction}) encodes Kirchhoff's Current Law (KCL), with entries $Q_{i,j} \in \{-1, 0, +1\}$ indicating whether branch $j$ appears in cutset $i$. Partitioning as $\mathbf{Q} = [\mathbf{Q}_c \mid \mathbf{q}_{out}]$ separates the submatrix $\mathbf{Q}_c$ for input and capacitors from the output column $\mathbf{q}_{out}$. 
\end{definition}

Not every circuit structure yields a functional converter---some have ambiguous voltages or charge flows. The submatrices $\mathbf{B}_c$ and $\mathbf{Q}_c$ determine whether a topology is valid.

\begin{definition}[Properly Posed, \citep{seeman2009design}]
  \label{def:proper}
  A two-phase SC converter is \emph{properly posed} if its topology uniquely determines all capacitor voltages and charge flows.
\end{definition}

\begin{theorem}[Properly Posed Criterion, \citep{seeman2009design}]
  \label{thm:proper}
  A two-phase converter is properly posed if and only if both $\mathbf{B}_c$ and $\mathbf{Q}_c$ are square and invertible.
\end{theorem}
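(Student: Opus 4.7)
The plan is to reduce ``properly posed'' to unique solvability of two decoupled linear systems---one for capacitor voltages obtained from stacked KVL, one for per-cycle branch charges obtained from stacked KCL---and then translate unique solvability into squareness plus invertibility of $\mathbf{B}_c$ and $\mathbf{Q}_c$ by elementary linear algebra.

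First I would set up the voltage system. Under Assumption~\ref{ass:noload}, capacitor voltages are constant over the period, so a single voltage vector $\mathbf{v}_c$ (collecting capacitor voltages together with the open-circuit output voltage) appears in both phase networks. Applying $\mathbf{B}\mathbf{v}=0$ with the partition $\mathbf{B}=[\mathbf{b}_{in}\mid\mathbf{B}_c]$ and known input $V_{IN}$ yields $\mathbf{B}_c\mathbf{v}_c=-\mathbf{b}_{in}V_{IN}$. Uniqueness of $\mathbf{v}_c$ for every admissible $V_{IN}$ is exactly the statement that $\mathbf{B}_c$ is square (matching the number of independent KVL constraints to the number of unknowns) and invertible; conversely, invertibility immediately produces the closed-form solution $\mathbf{v}_c=-\mathbf{B}_c^{-1}\mathbf{b}_{in}V_{IN}$.

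Next I would dualize the argument for charges using the fundamental cutset formulation. Let $\mathbf{q}_c$ collect the per-cycle charge delivered by the input and by each capacitor; then $\mathbf{Q}\mathbf{q}=0$ with $\mathbf{Q}=[\mathbf{Q}_c\mid\mathbf{q}_{out}]$ gives $\mathbf{Q}_c\mathbf{q}_c=-\mathbf{q}_{out}q_{out}$, and uniqueness of $\mathbf{q}_c$ for any prescribed $q_{out}$ is equivalent to $\mathbf{Q}_c$ being square and invertible by the same linear-algebra argument. Combining the two equivalences with Definition~\ref{def:proper} (unique voltages \emph{and} unique charge flows) then yields the biconditional.

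The hard part will be justifying the stacked KVL/KCL reduction itself. Specifically, I would need to verify (i) that stacking the two phases preserves the right notion of linear independence, so that ``square'' corresponds to a matched count of genuinely independent equations and unknowns rather than an accidental dimensional coincidence arising from redundant loops or cutsets; and (ii) that periodic steady state supplies exactly the charge-conservation coupling needed for the cutset system to close on $(\mathbf{q}_c,q_{out})$ alone, once per-capacitor flows are summed across both phases. Both facts reduce to standard consequences of the spanning-tree decomposition together with Assumption~\ref{ass:limits}, but they require careful bookkeeping; once in hand, the equivalence between unique solvability and squareness plus invertibility is routine.
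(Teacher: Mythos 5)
Your proposal matches the paper's proof essentially step for step: both reduce ``properly posed'' to unique solvability of the stacked KVL system $\mathbf{B}_c\mathbf{v}_c=-\mathbf{b}_{in}V_{IN}$ and the stacked KCL system $\mathbf{Q}_c\mathbf{a}=-\mathbf{q}_{out}$, then translate unique solvability into squareness plus invertibility by the standard dimension-count and kernel argument. One small slip: the per-cycle charge-conservation coupling you invoke for the cutset system comes from periodic steady state, i.e.\ Assumption~\ref{ass:noload}, not Assumption~\ref{ass:limits}; with that correction your ``careful bookkeeping'' remarks are exactly the caveats the paper treats informally (calling the overdetermined case ``generically inconsistent'').
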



Once a topology is verified as properly posed, we extract the VCR. Since capacitor voltages are constant under Assumption~\ref{ass:noload}, KVL must hold in both phases simultaneously, imposing $\mathbf{B}\mathbf{v} = \mathbf{0}$ where $\mathbf{v} = [V_{IN}, v_{c_1}, \ldots, v_{c_k}, V_{OUT}]^\top$. Using the partitioned form, this becomes $\mathbf{b}_{in} V_{IN} + \mathbf{B}_c \mathbf{v}_c = \mathbf{0}$, where $\mathbf{v}_c = [v_{c_1}, \ldots, v_{c_k}, V_{OUT}]^\top$.

\begin{theorem}[VCR Extraction, \citep{makowski1995performance}]
  \label{thm:vcr}
  For a properly posed converter, the capacitor and output voltages are $\mathbf{v}_c = -\mathbf{B}_c^{-1} \mathbf{b}_{in} V_{IN}$. Since voltages scale linearly with $V_{IN}$, setting $V_{IN} = 1$ gives $V_{OUT} = M$ directly, where:
  \begin{equation}
    M = \frac{V_{OUT}}{V_{IN}} = \left[-\mathbf{B}_c^{-1} \mathbf{b}_{in}\right]_{\textit{last}}
  \end{equation}
\end{theorem}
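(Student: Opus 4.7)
The plan is to derive the formula directly from Kirchhoff's Voltage Law applied simultaneously to both phase networks, relying on \cref{ass:noload} to justify the coupling between phases and on \cref{thm:proper} to guarantee invertibility. First I would note that by the definition of the fundamental loop matrix (\cref{def:loopmatrix}), each row of $\mathbf{B}^{\phi}$ enumerates the signed branch incidences along a fundamental loop of $G_{\phi}$, so KVL in phase $\phi$ is the linear statement $\mathbf{B}^{\phi}\mathbf{v}^{\phi} = \mathbf{0}$, where $\mathbf{v}^{\phi}$ lists the branch voltages present during that phase.

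Next I would argue that under \cref{ass:noload} the branch voltages entering both systems can be identified with a single vector $\mathbf{v} = [V_{IN}, v_{c_1}, \ldots, v_{c_k}, V_{OUT}]^{\top}$: the input is a DC source (hence phase-independent), the output is open-circuited and therefore a well-defined node voltage in periodic steady state, and each capacitor voltage is constant across the switching period. Consequently the two phase constraints $\mathbf{B}^{1}\mathbf{v} = \mathbf{0}$ and $\mathbf{B}^{2}\mathbf{v} = \mathbf{0}$ can be stacked into $\mathbf{B}\mathbf{v} = \mathbf{0}$. Partitioning as in \cref{def:loopmatrix} then yields
\begin{equation}
\mathbf{b}_{in} V_{IN} + \mathbf{B}_c \mathbf{v}_c = \mathbf{0}.
\end{equation}

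The last step is algebraic: since the converter is properly posed, \cref{thm:proper} ensures $\mathbf{B}_c$ is square and invertible, so $\mathbf{v}_c = -\mathbf{B}_c^{-1}\mathbf{b}_{in} V_{IN}$. Because $V_{OUT}$ is the final entry of $\mathbf{v}_c$ by the ordering convention fixed in \cref{def:loopmatrix}, and the map $V_{IN} \mapsto \mathbf{v}_c$ is linear, normalizing $V_{IN} = 1$ extracts $M = V_{OUT}/V_{IN}$ as $\left[-\mathbf{B}_c^{-1}\mathbf{b}_{in}\right]_{\textit{last}}$.

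The main obstacle is conceptual rather than computational: one has to be careful that stacking the two phase loop matrices into $\mathbf{B}$ does not over-constrain the system. This is exactly the role of proper posedness—\cref{thm:proper} guarantees that the combined $\mathbf{B}_c$ is square, so the stacked KVL system has a unique solution for a given $V_{IN}$. Without this, the two phases could impose inconsistent or redundant constraints on the capacitor voltages and no well-defined VCR would exist. A secondary subtlety is that the open-circuit assumption is what allows $V_{OUT}$ to be treated symmetrically with the capacitor voltages as an unknown in $\mathbf{v}_c$; relaxing it would introduce a load current and break the purely KVL-based derivation.
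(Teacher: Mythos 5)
Your proposal is correct and follows essentially the same route as the paper's proof: stack the phase-wise KVL constraints into $\mathbf{B}\mathbf{v}=\mathbf{0}$ using \cref{ass:noload}, partition into $\mathbf{b}_{in}V_{IN}+\mathbf{B}_c\mathbf{v}_c=\mathbf{0}$, invoke \cref{thm:proper} to invert $\mathbf{B}_c$, and read off the last entry. The paper condenses the stacking step by citing it from the proof of \cref{thm:proper} rather than re-deriving it, and your remarks on why proper posedness rules out over-constraint and why open-circuit is needed to treat $V_{OUT}$ as an unknown are accurate glosses rather than departures.
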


\subsubsection{SSL Performance Metric}
Under Assumption~\ref{ass:limits}, SSL efficiency is limited by capacitor charging losses, which depend on how charge distributes among capacitors. This is characterized by the \emph{charge multiplier vector} $\mathbf{a}_c$, where each element $a_{c,i}$ represents the charge flow into capacitor $i$ normalized by output charge. The full multiplier vector $\mathbf{a}$ (containing charge multipliers $\mathbf{a}_c$) is computed by solving $\mathbf{Q}_c \mathbf{a} = -\mathbf{q}_{out}$.

\begin{theorem}[SSL Metric, \citep{seeman2008analysis}]
  \label{thm:ssl}
  For a properly posed two-phase converter with capacitor charge multipliers $\mathbf{a}_c$ and capacitor voltages $\mathbf{v}_c$ (normalized to $V_{IN}$), the SSL performance metric is:
  \begin{equation}
    M_{SSL} = \frac{2M^2}{\left(\sum_{i \in caps} |a_{c,i}| \cdot |v_{c,i}|\right)^2}
  \end{equation}
\end{theorem}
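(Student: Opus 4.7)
The plan is to derive the claim in three stages: first, establish the SSL output impedance from charge-redistribution losses; second, eliminate the dependence on absolute capacitor values by optimally sizing capacitors under a fixed energy-storage budget via Cauchy–Schwarz; third, normalize to obtain the claimed dimensionless metric.

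First I would characterize per-cycle charge flows. Under Assumption~\ref{ass:limits}, each capacitor $i$ receives impulsive charge $\Delta q_i = a_{c,i}\,q_{out}$ per switching period, where $q_{out}=I_{OUT}/f_{sw}$ and $\mathbf{a}_c$ is the solution of $\mathbf{Q}_c\mathbf{a} = -\mathbf{q}_{out}$ (so KCL is built in). In SSL the average capacitor voltages are fixed to leading order (Assumption~\ref{ass:noload}), yet each transfer still incurs an adiabatic loss of $\tfrac{1}{2}(\Delta q_i)^2/C_i$ because the driving node differs infinitesimally from the capacitor voltage. Summing this loss over all capacitors and both phases, multiplying by $f_{sw}$ to get power, and matching with $P_{loss}=I_{OUT}^2 R_{SSL}$ yields
\begin{equation}
R_{SSL} = \sum_{i\in caps}\frac{a_{c,i}^2}{C_i\, f_{sw}}.
\end{equation}

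Next, since $R_{SSL}$ still depends on absolute capacitor sizes, I would eliminate them by optimally allocating $\{C_i\}$ subject to a fixed total energy-storage budget $E=\tfrac{1}{2}\sum_i C_i v_{c,i}^2$, which is the natural area/cost proxy for SC converters. Applying Cauchy–Schwarz with $x_i = |a_{c,i}|/\sqrt{C_i f_{sw}}$ and $y_i = \sqrt{C_i}\,|v_{c,i}|$,
\begin{equation}
\frac{1}{f_{sw}}\Bigl(\sum_i |a_{c,i}||v_{c,i}|\Bigr)^{\!2} \;\le\; \Bigl(\sum_i \frac{a_{c,i}^2}{C_i f_{sw}}\Bigr)\Bigl(\sum_i C_i v_{c,i}^2\Bigr) = 2E\cdot R_{SSL},
\end{equation}
with equality iff $C_i \propto |a_{c,i}|/|v_{c,i}|$. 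Substituting this optimum yields the topology-intrinsic impedance $R_{SSL}^{*} = (\sum_i |a_{c,i}||v_{c,i}|)^2 / (2 E f_{sw})$.

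Finally I would define the dimensionless metric as $M_{SSL} := (V_{OUT}^2/R_{SSL}^{*})/(E f_{sw})$, i.e.\ the short-circuit output power per unit of (stored energy $\times$ switching frequency)—the canonical ``power per cost'' benchmark that renders topologies directly comparable. Substituting $R_{SSL}^{*}$ and invoking Theorem~\ref{thm:vcr} with $V_{IN}=1$, $V_{OUT}=M$ cancels both $E$ and $f_{sw}$, leaving
\begin{equation}
M_{SSL} = \frac{2M^2}{\bigl(\sum_{i\in caps} |a_{c,i}||v_{c,i}|\bigr)^{2}},
\end{equation}
as claimed. The main obstacle is step one: although the impedance formula is classical, rigorously justifying the $\tfrac{1}{2}(\Delta q)^2/C$ loss per transfer in the asymptotic SSL regime—where capacitor voltages must simultaneously be treated as constant (for KVL and for the VCR derivation) and as differentially varying (to book-keep dissipation)—requires a careful adiabatic limit argument together with a phase-by-phase accounting of charges that is consistent with the fundamental-cutset equations underlying $\mathbf{a}_c$. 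Once that is in place, stages two and three are routine convex-analytic manipulations.
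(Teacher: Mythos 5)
Your proposal is correct and follows the same three-stage skeleton as the paper's proof (SSL output impedance $\to$ optimal capacitor sizing $\to$ normalized metric), arriving at $R_{SSL}=\sum_i a_{c,i}^2/(C_i f_{sw})$, the equality condition $C_i \propto |a_{c,i}|/|v_{c,i}|$, and the final formula. However, Step~1 takes a genuinely different route. The paper derives $R_{SSL}$ network-theoretically: it applies Tellegen's theorem (Lemma~\ref{lem:tellegen}) to the converter with the input shorted, stacks both phases, and then uses periodic steady-state charge conservation (Lemma~\ref{lem:charge_conservation}) to annihilate the DC part of each $v_{c,i}^{(j)}$, so that only the constitutive terms $\Delta v_{c,i}^{(2)} = a_{c,i}^{(2)} q_{out}/C_i$ survive. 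That bookkeeping is precisely what makes the per-capacitor loss formula $\tfrac{1}{2}(\Delta q_i)^2/C_i$ rigorous for an \emph{interconnected} SC network; you instead assume it up front as an ``adiabatic loss'' heuristic, and you correctly identify this as the weak point — it is not a priori obvious that charge-sharing losses decouple capacitor-by-capacitor when several capacitors and the source equilibrate simultaneously in one impulsive transfer. The Tellegen route is exactly the ``careful argument'' you anticipate needing, so the gap you flag is real but closable by swapping in the paper's Step~1. For Step~2, your Cauchy--Schwarz argument is a clean alternative to the paper's (implicit) Lagrange-multiplier minimization and has the virtue of producing both the bound on $R_{SSL}$ and its tightness condition in one stroke; both yield the same $C_i^*$. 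Step~3 (defining $M_{SSL} := (V_{OUT}^2/R_{SSL}^*)/(E f_{sw})$ and normalizing $V_{IN}=1$) is identical to the paper's.
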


\subsubsection{FSL Performance Metric}
Under Assumption~\ref{ass:limits}, FSL efficiency is limited by switch conduction losses, which depend on switch charge multipliers $\mathbf{a}_r$ and blocking voltages $\mathbf{v}_r$. A switch conducts current in one phase and blocks voltage in the other. The switch charge multiplier $a_{r,i}$ is computed from phase-specific cutset matrices relating switch currents to capacitor currents; the blocking voltage $v_{r,i}$ is obtained from the fundamental loop matrix of the phase where the switch is open (see Appendix~\ref{sec:fsl_extraction} for the explicit extraction equations).

\begin{theorem}[FSL Metric, \citep{seeman2008analysis}]
  \label{thm:fsl}
  For a properly posed two-phase converter with switch charge multipliers $\mathbf{a}_r$ and blocking voltages $\mathbf{v}_r$ (normalized to $V_{IN}$), the FSL performance metric is:
  \begin{equation}
    M_{FSL} = \frac{M^2}{2\left(\sum_{i \in sw} |a_{r,i}| \cdot |v_{r,i}|\right)^2}
  \end{equation}
\end{theorem}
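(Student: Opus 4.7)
The plan is to derive the effective FSL output impedance from per-switch Ohmic losses under \cref{ass:limits}, and then obtain the stated dimensionless FoM by optimal switch sizing under a technology-normalized area budget. First I would use the FSL operating assumption that capacitor voltages are held constant over a switching period, so charge balance forces the conduction-phase current of switch $i$ to be essentially a rectangular pulse. Assuming a $50\%$ duty cycle, its magnitude while closed is $2|a_{r,i}|I_{out}$ (since the required charge $|a_{r,i}|q_{out}$ must be delivered in time $T/2$), giving a full-period RMS current of $\sqrt{2}\,|a_{r,i}|\,I_{out}$. Summing conduction losses $R_i I_{rms,i}^2$ over all switches and equating with $R_{FSL}\,I_{out}^2$ yields
\begin{equation}
  R_{FSL} \;=\; 2\sum_{i\in sw} R_i\,a_{r,i}^2,
\end{equation}
where $R_i$ is the on-resistance of switch $i$ in its conducting phase.

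Next I would convert this expression into a topology-only metric by optimizing over switch sizing. Because MOSFET on-resistance and blocking-voltage rating are coupled through device physics, the appropriate technology-invariant constraint has the form $\sum_i v_{r,i}^2 / R_i = V_{IN}^2$, i.e.\ a normalized conductance-weighted area budget. Applying the Cauchy--Schwarz inequality to the vectors $\{\sqrt{R_i}\,|a_{r,i}|\}$ and $\{|v_{r,i}|/\sqrt{R_i}\}$ gives
\begin{equation}
  \Bigl(\sum_i R_i a_{r,i}^2\Bigr)\Bigl(\sum_i \tfrac{v_{r,i}^2}{R_i}\Bigr) \;\ge\; \Bigl(\sum_i |a_{r,i}|\,|v_{r,i}|\Bigr)^2,
\end{equation}
with equality precisely when $R_i \propto |v_{r,i}|/|a_{r,i}|$. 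Hence the optimally-sized, unit-budget FSL output impedance is $R_{FSL}^{\star} = 2\bigl(\sum_i |a_{r,i}|\,|v_{r,i}|\bigr)^2$. Defining $M_{FSL}$ through the natural identity $M_{FSL}\cdot R_{FSL}^{\star} = V_{OUT}^2 = M^2$ under the $V_{IN}=1$ normalization of \cref{thm:vcr} produces the stated formula.

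The switch-level inputs $\mathbf{a}_r$ and $\mathbf{v}_r$ feeding the bound are exactly the quantities produced by the phase-specific cutset/loop extraction sketched before the theorem (and worked out in \cref{sec:fsl_extraction}); their existence and uniqueness are guaranteed by \cref{thm:proper}. The main obstacle I expect is justifying the switch-sizing normalization cleanly: since the theorem claims a topology-intrinsic FoM, the argument must make transparent both (i) that $\sum_i v_{r,i}^2/R_i$ is the physically correct area cost for the MOSFET size-versus-rating trade-off and (ii) that the Cauchy--Schwarz bound is actually attainable by the explicit sizing rule $R_i \propto |v_{r,i}|/|a_{r,i}|$. A secondary technicality is relaxing the symmetric duty cycle; I expect this to only rescale the leading constant while leaving the $\bigl(\sum |a_{r,i}|\,|v_{r,i}|\bigr)^2$ structure untouched.
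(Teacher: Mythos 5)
Your proposal follows essentially the same three-step route as the paper's proof: derive $R_{FSL} = 2\sum_i R_i a_{r,i}^2$ from per-switch conduction loss at $D=1/2$, minimize it under the $G$--$V^2$ (equivalently $\sum_i v_{r,i}^2/R_i$) switch budget, and normalize by $V_{IN}=1$ to obtain the metric. The only cosmetic difference is that you make the optimization explicit via Cauchy--Schwarz and absorb $X_{total}$ into a unit budget upfront, whereas the paper carries $X_{total}$ symbolically and simply states the optimizer $G_i^\ast \propto |a_{r,i}|/|v_{r,i}|$; both yield the identical formula.
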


Both metrics assume optimal component sizing—each capacitor or switch sized proportional to its charge-voltage product—enabling topology comparison independent of actual component values. Higher metrics indicate better utilization efficiency. Using Theorems~\ref{thm:proper}--\ref{thm:fsl}, PowerGenie validates, extracts VCRs, and computes metrics via matrix operations, enabling simulation-free evaluation at scale.

\subsection{Evolutionary Finetuning}
Building on its efficient analytical framework, PowerGenie achieves performance-driven alignment through evolutionary finetuning. It first pretrains a GPT model~\citep{brown2020language} to generate circuit topologies as Eulerian circuits—traversals encoding the entire structure by visiting each directed edge exactly once~\citep{anonymous2024analoggenie}. This model then undergoes evolutionary finetuning, which improves generation quality through co-evolution of the model and its training distribution. Traditional RL methods~\citep{vijayaraghavan2025autocircuit, gao2025eva} struggle with circuit generation due to reward hacking: when the policy finds a high-reward circuit, gradient updates reinforce that pattern, causing mode collapse onto similar designs that may exploit metric shortcuts. Our evolutionary approach breaks this cycle through tournament selection, which maintains diversity by giving lower-fitness circuits nonzero survival probability, and global uniqueness filtering, which requires each new circuit to differ from all previous designs. Together, these forcing continuous exploration rather than collapse onto a single mode, preventing repeated exploitation.

\begin{algorithm}[t]
\caption{Evolutionary Finetuning}
\label{alg:evolutionary-finetuning}
\begin{algorithmic}[1]
\STATE \textbf{Input:} Pretrained model $\mathcal{M}$, initial population $\mathcal{P}_0$, generations $G$, elite ratio $\alpha$, tournament size $k$
\STATE $\mathcal{H} \gets \emptyset$ \hfill $\triangleright$ History of analytically validated graphs
\FOR{$g = 1$ to $G$}
    \STATE \textit{// Phase 1: Circuit-Level Selection}
    \STATE Group $\mathcal{P}_{g-1}$ by circuit ID; compute per-circuit fitness
    \STATE $\mathcal{S}_g \gets$ top $\alpha$ circuits $\cup$ $k$-way tournament on rest
    \STATE \textit{// Phase 2: Fitness-Weighted Finetuning}
    \STATE Construct training batches from $\mathcal{S}_g$, sampling with prob. $\propto (\text{fitness} - \min + \epsilon)^\beta$
    \STATE Finetune $\mathcal{M}$ on sampled batches via next-token prediction loss~\citep{radford2018improving}
    \STATE \textit{// Phase 3: Generation with Global Uniqueness}
    \STATE $\mathcal{C}_{\text{new}} \gets$ sample circuits from $\mathcal{M}$
    \FOR{each $c \in \mathcal{C}_{\text{new}}$}
        \IF{$c$ passes functional validation and is not isomorphic to any $h \in \mathcal{H}$}
            \STATE $\mathcal{H} \gets \mathcal{H} \cup \{c\}$
            \STATE Augment $c$ with multiple Eulerian circuits
            \STATE $\mathcal{P}_g \gets \mathcal{P}_{g-1} \cup \{c \text{ and augmentations}\}$
        \ENDIF
    \ENDFOR
\ENDFOR
\STATE \textbf{return} $\mathcal{M}$, $\mathcal{P}_G$
\end{algorithmic}
\end{algorithm}

The evolutionary loop consists of three phases: selection, finetuning, and generation. Selection combines two complementary mechanisms to balance exploitation and exploration. \emph{Elite selection} directly preserves the top-$\alpha$ performing circuits across generations, ensuring that proven high-quality designs are never lost. \emph{Tournament selection} fills the remaining selection slots through repeated $k$-way competitions: for each slot, $k$ circuits are randomly sampled from the current population, and the one with the highest fitness wins selection. This probabilistic approach gives lower-fitness circuits a nonzero chance of survival. Such diversity preservation is crucial, as seemingly mediocre circuits may contain structural motifs that recombine into superior designs in subsequent generations.

During finetuning, batch sampling is weighted by fitness (Appendix~\ref{sec:evo_config}), focusing learning on high-performing circuits while maintaining population coverage. The model is then updated using standard next-token prediction loss~\citep{radford2018improving} on these fitness-weighted batches, effectively distilling high-performing circuit patterns into the model's learned distribution. The finetuned model implicitly learns crossover and mutation: by training on diverse high-fitness circuits, it internalizes successful structural motifs and recombines them during generation, which replaces hand-designed variation operators of traditional genetic algorithms~\citep{holland1992genetic} with a learned generative process. 

After finetuning, the model generates new candidate circuits via temperature-scaled sampling. To ensure genuine novelty, each candidate undergoes analytical evaluation and graph isomorphism testing against all historically validated circuits; only topologically unique and functionally valid designs enter the population. For each accepted circuit, we generate multiple Eulerian circuit representations via depth-first search, providing diverse training signals from the same topology. This co-evolutionary loop provides stable optimization without RL's high-variance gradient estimates.

\section{Results}
\label{sec: Results}
\subsection{Experiment Setup}
\noindent\textbf{Datasets.} PowerGenie expands the dataset from~\citep{anonymous2024analoggenie} to 11,837 unique circuit topologies across 11+ analog circuit types. Among these, 3,824 are reconfigurable power converters, of which 432 are 8-mode converters targeting VCRs of 1/5, 1/4, 1/3, 1/2, 2/3, 3/4, 4/5, and 1, labeled with FoM for performance-driven finetuning.

\noindent\textbf{Training Setup.} PowerGenie uses a 6-layer GPT model with 6 heads and embedding dimension 384 (10M parameters). Pretraining runs 100K steps with AdamW~\citep{loshchilov2017decoupled} (learning rate $6 \times 10^{-4}$, cosine decay, batch size 128). Evolutionary finetuning runs 500 generations, selecting 70\% of circuits per generation (elite ratio $\alpha = 0.15$, tournament size $k = 3$) and generating 256 candidates at temperature 0.7. Experiments use a single L40s GPU.

\noindent\textbf{Baselines.} We compare against: (1) Preference alignment methods PPO~\citep{schulman2017proximal} and DPO~\citep{rafailov2023direct} (2) AI-based power converter generation methods LaMAGIC~\citep{chang2024lamagic, chang2025lamagic2}, AUTOCIRCUIT-RL~\citep{vijayaraghavan2025autocircuit}, and AnalogGenie~\citep{anonymous2024analoggenie}. We also compare PowerGenie's best discovered circuit against the training set's best to evaluate whether it can surpass existing designs, using identical sizing optimization (Appendix~\ref{sec:sizing_opt}) to ensure fair comparison.

\noindent\textbf{Evaluation Metrics.} We sample 1,000 topologies from each method and measure pass rates across four criteria. \\
(1) \textit{Syntax Validity}: A topology is syntax-valid if it yields a well-formed SPICE netlist with proper device connections. \\
(2) \textit{Functional Validity}: A syntax-valid topology is functionally valid if it satisfies Theorem~\ref{thm:proper} and delivers all 8 target VCRs as verified by our analytical framework. \\
(3) \textit{Novelty}: A functionally valid topology is novel if it is not isomorphic to any topology in the training set and any topology discovered in previous generations. \\
(4) \textit{Figure of Merit}: For functionally valid topologies:
\begin{equation}
    \text{FoM} = \underbrace{\frac{1}{8}\sum_{m=1}^{8} \left(\frac{1}{2}\tilde{M}_{SSL}^{(m)} + \frac{1}{2}\tilde{M}_{FSL}^{(m)}\right)}_{\text{Efficiency}} - \underbrace{\tilde{N}_{cap}}_{\text{Area}}
    \label{eq:FoM}
\end{equation}
where $\tilde{M}_{SSL}^{(m)}$, $\tilde{M}_{FSL}^{(m)}$, and $\tilde{N}_{cap}$ are min-max normalized: $\tilde{x} = (x - x_{\min})/(x_{\max} - x_{\min})$. For SSL and FSL, $x_{\min} = 0$ and $x_{\max}$ is set to theoretical limits~\citep{seeman2008analysis}; for capacitor count, $N_{\min} = 0$ and $N_{\max} = 24$ (maximum in the training set). The efficiency term equally weights SSL and FSL losses, while the area term penalizes capacitor count, which dominates silicon area in integrated SC converters (Appendix~\ref{sec:sizing}). Higher FoM indicates better efficiency with fewer capacitors, ranging from $-1$ to $1$; we report the best FoM among 1,000 generated topologies.

\begin{table}[ht]
\centering
\caption{Generation quality comparison between evolutionary finetuning and existing preference alignment methods.}
\resizebox{\columnwidth}{!}{
\begin{threeparttable}
\begin{tabular}{lcccc}
\toprule
Method & Syntax $\uparrow$ & Functional $\uparrow$ & Novelty $\uparrow$ & FoM $\uparrow$ \\
\midrule
Pretrain & 38.2\% & 1.10\% & 0.4\% & 0.246 \\
Pretrain+SFT & 48.9\% & 40.2\% & 1.8\% & 0.263 \\
\midrule
Pretrain+DPO & 35.5\% & 1.6\% & 0.4\% & 0.256 \\
Pretrain+SFT+DPO & 44.5\% & 37.7\% & 2.7\% & 0.272 \\
\midrule
Pretrain+PPO & 29.7\% & 0.8\% & 0.3\% & 0.253 \\
Pretrain+SFT+PPO & 47.4\% & 39\% & 3.5\% & 0.284 \\
\midrule
\textbf{Pretrain+Evo} & \textbf{90.8\%} & \textbf{85.3\%} & \textbf{32.1\%} & \textbf{0.323} \\
\bottomrule
\end{tabular}
\begin{tablenotes}\scriptsize
\item All methods generate 1,000 topologies for evaluation. Training set best FoM = 0.263.
\end{tablenotes}
\end{threeparttable}
}
\label{tab:performance_comparison}
\end{table}

\begin{table}[ht]
\centering
\caption{Circuit complexity comparison between PowerGenie and existing AI-based power converter generation methods.}
\resizebox{\columnwidth}{!}{
\begin{threeparttable}
\begin{tabular}{lcccc}
\toprule
Method & \# Modes & \# Devices & \# Ctrl & Surpass Train\tnote{$\dagger$} \\
\midrule
LaMAGIC & 1 & 6 & 5 & No \\
AUTOCIRCUIT-RL & 1 & 10 & 5 & No \\
AnalogGenie & 1 & 7 & 2 & No \\
\midrule
\textbf{PowerGenie} & \textbf{8} & \textbf{57} & \textbf{65536} & \textbf{Yes} \\
\bottomrule
\end{tabular}
\begin{tablenotes}\scriptsize
\item[$\dagger$] Discovers circuit with FoM exceeding training set best.
\end{tablenotes}
\end{threeparttable}
}
\label{tab:complexity_comparison}
\end{table}

\subsection{Comparison with Preference Alignment Methods}
Table~\ref{tab:performance_comparison} presents a comprehensive comparison between evolutionary finetuning and existing preference alignment methods for 8-mode power converter topology generation. The pretrained model alone exhibits limited generation quality, with only 38.2\% syntax validity and 1.10\% functional validity, while standard supervised finetuning (SFT) provides meaningful improvements, raising syntax validity to 48.9\% and functional correctness to 40.2\%.

Notably, applying preference optimization methods directly to the pretrained model proves counterproductive---both DPO and PPO actually degrade syntax validity (35.5\% and 29.7\%, respectively) compared to the pretrained baseline, suggesting that without sufficient syntactic grounding from SFT, these methods optimize for preference signals at the expense of syntax validity. When combined with SFT, DPO and PPO recover and offer modest improvements in novelty (2.7\% and 3.5\%) and FoM (0.272 and 0.284), but the gains remain incremental---likely due to mode collapse where model repeatedly generates similar high-reward designs.

In stark contrast, evolutionary finetuning (Pretrain+Evo) achieves transformative improvements across all dimensions: 90.8\% syntax validity, 85.3\% functional validity, 32.1\% novelty, and an FoM of 0.323. The novelty rate is particularly striking---nearly an order of magnitude higher than the best preference-based method---validating that tournament selection and global uniqueness filtering successfully prevent mode collapse. Moreover, the achieved FoM of 0.323 substantially surpasses the training set's best of 0.263, demonstrating that the co-evolutionary loop discovers genuinely superior topologies beyond the training distribution.

\begin{table}[ht]
\centering
\caption{Ablation study of evolutionary finetuning components.}
\resizebox{\columnwidth}{!}{
\begin{threeparttable}
\begin{tabular}{lcccc}
\toprule
Method & Syntax $\uparrow$ & Functional $\uparrow$ & Novelty $\uparrow$ & FoM $\uparrow$ \\
\midrule
w/o Selection & 79.3\% & 73.7\% & 18.4\% & 0.261 \\
w/o Generation &  87.8\% & 82.6\% & 0.8\% &  0.263 \\
\midrule
\textbf{PowerGenie} & \textbf{90.8\%} & \textbf{85.3\%} & \textbf{32.1\%} & \textbf{0.323} \\
\bottomrule
\end{tabular}
\begin{tablenotes}\scriptsize
\item All methods generate 1,000 topologies for evaluation. Training set best FoM = 0.263.
\end{tablenotes}
\end{threeparttable}
}
\label{tab:ablation}
\end{table}

\subsection{Comparison with Existing AI-based Power Converter Design Methods}
Table~\ref{tab:complexity_comparison} compares PowerGenie against existing AI-based power converter generation methods. Prior approaches like LaMAGIC, AUTOCIRCUIT-RL, and AnalogGenie are limited to single-mode designs with modest device counts (6–10) and simple control requirements (2–5 states). In contrast, PowerGenie targets 8-mode reconfigurable converters, representing a qualitative leap in design complexity. The resulting circuits contain 57 devices with 65,536 possible control configurations ($2^{16}$ states across 8 modes and 2 phases), reflecting the exponential scaling of control complexity described above. Despite this dramatically expanded design space, PowerGenie discovers circuits with FoM exceeding the training set's best—a capability absent in all prior methods. This validates that the co-evolutionary loop successfully balances exploration and exploitation, enabling discovery of genuinely novel high-performance topologies rather than collapsing onto known designs. By scaling to the complexity regime demanded by modern processors and IoT applications, PowerGenie demonstrates that AI-driven topology discovery can address converter design challenges previously tractable only through manual expertise.

\subsection{Ablation Study}
Table~\ref{tab:ablation} validates that both selection and generation are essential components of evolutionary finetuning. Removing selection---which drives exploitation by preferentially sampling high-fitness circuits for training---reduces FoM from 0.323 to 0.261, falling below the training set's best of 0.263. While this variant maintains reasonable novelty (18.4\%), it generates novel circuits without ensuring quality, producing exploration without directed improvement. Conversely, removing generation---which drives exploration by producing new candidate circuits---yields FoM of exactly 0.263, matching but not exceeding the training best, with novelty collapsing to just 0.8\%. This variant achieves exploitation without exploration, effectively memorizing high-performing training examples rather than discovering new designs. Only when both components operate together does PowerGenie achieve the synergy necessary for genuine discovery: selection focuses learning on promising regions while generation continuously proposes novel candidates that push beyond known solutions.

\begin{figure*}[!t]
\begin{center}
\includegraphics[width=0.9\linewidth]{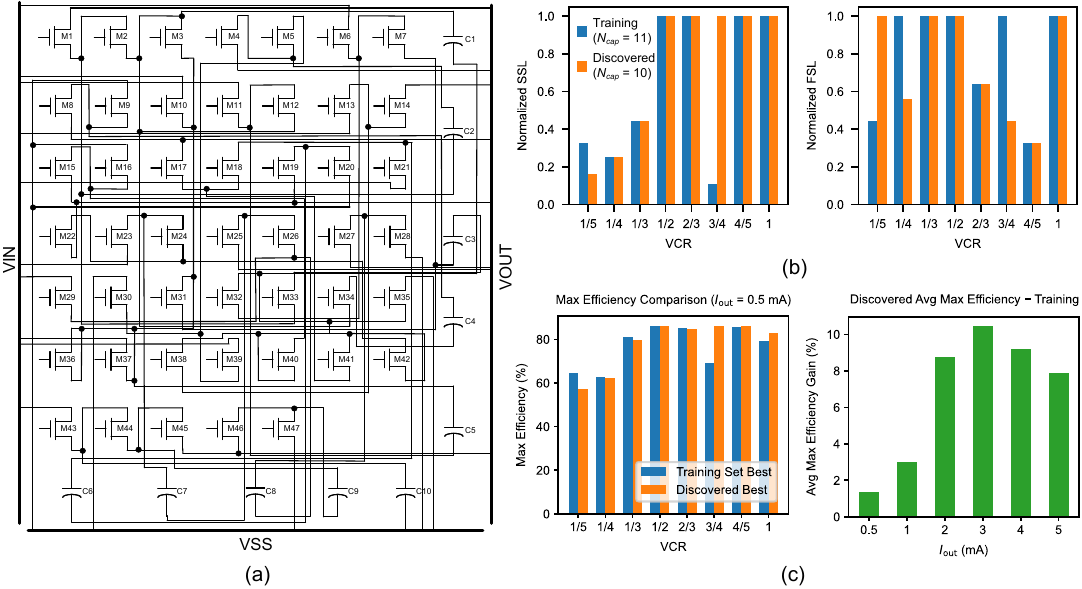}
\caption{(a) PowerGenie-discovered best eight-mode power converter. Discovered control scheme is shown in Table~\ref{tab:8mode_control_powergenie} (b) Analytical comparison: discovered-best achieves better SSL ($\tilde{M}_{\text{SSL}}$ = 0.732 vs.\ 0.642) with fewer capacitors ($\tilde{N}_{cap}$ = 0.417 vs.\ 0.458), while training-best shows better FSL ($\tilde{M}_{\text{FSL}}$ = 0.801 vs.\ 0.747). (c) SPICE results (TSMC 180\,nm, $f_{\text{sw}}$ = 10\,MHz, $C_{\text{total}}$ = 3\,nF, 2\% parasitic cap). Left: efficiency comparison at $I_{\text{out}}$ = 0.5\,mA aligns with SSL metrics, indicating capacitor loss dominance. Right: average absolute efficiency gain of discovered topologies increases with load (reaching $10.45\%$ at 3\,mA), demonstrating superior power handling capability.}
\label{fig: discover}
\end{center}
\end{figure*}

\subsection{Performance Validation}
To validate that evolutionary finetuning discovers genuinely superior topologies, we compare the best discovered eight-mode converter against the highest-performing topology in the training set. The analytical comparison in Figure~\ref{fig: discover}(b) reveals an interesting trade-off: the discovered circuit achieves substantially better SSL performance ($\tilde{M}_{\text{SSL}}=0.732$ vs.\ 0.642) while requiring fewer capacitors ($\tilde{N}_{cap}=0.417$ vs.\ 0.458), whereas the training-best topology exhibits superior FSL ($\tilde{M}_{\text{FSL}}=0.801$ vs.\ 0.747). These analytical advantages translate to measurable gains in SPICE simulation: as shown in Figure~\ref{fig: discover}(c) right, the average absolute efficiency gain of discovered topologies increases with load current, reaching $10.45\%$ at 3\,mA. This scaling behavior confirms that the discovered topology offers fundamentally superior power handling capability and efficiency, validating that evolutionary finetuning successfully explores beyond the design space boundaries defined by training circuits.

The SPICE simulation results (TSMC 180~nm, $f_{\text{sw}} = 10$~MHz, $C_{\text{total}} = 3$~nF, 2\% parasitic capacitance) in Figure~\ref{fig: discover}(c) left align with analytical SSL predictions, particularly at VCR~$= 1/5$ (Mode~1) and VCR~$= 3/4$ (Mode~6), where efficiency gaps closely mirror SSL differences. Such alignment suggests that capacitor-related losses dominate in the practical implementations, with switching conduction losses playing secondary roles. This insight further reinforces the importance of minimizing passive component count and optimizing charge transfer efficiency—objectives that PowerGenie's fitness function directly targets through SSL maximization and capacitor penalty terms. Crucially, while SPICE-based sizing optimization requires approximately two hours per topology, the analytical model evaluates performance limits within seconds, enabling rapid fitness assessment across thousands of candidates.
\section{Conclusion and Future Work}
\label{sec: conclusion}
This work presents PowerGenie, an evolutionary finetuning framework that enables the discovery of reconfigurable SC power converter topologies surpassing circuits in the training set. By treating circuit generation as an iterative refinement process guided by performance-aware fitness evaluation, PowerGenie demonstrates that generative models can effectively transcend the boundaries of their training data to explore novel design spaces. While the current framework focuses on SC converters, promising future extensions could include discovering of diverse converter architectures and generalizing to other switched-capacitor circuits, such as data converters and discrete-time filters, where similar charge redistribution principles govern performance. We believe PowerGenie establishes a solid foundation for AI-driven circuit discovery that can accelerate innovation across diverse analog and mixed-signal design domains.




\section*{Impact Statement}
This paper presents work whose goal is to advance the field of Machine
Learning. There are many potential societal consequences of our work, none
which we feel must be specifically highlighted here.


\bibliography{bibtex/example_paper}
\bibliographystyle{icml2026}

\newpage
\appendix
\onecolumn
\section*{Appendix Overview}
\label{sec:appendix_overview}

\begin{itemize}
    \item \hyperref[sec:appendix_theory]{\textbf{Appendix A: Theoretical Foundations}}
    \begin{itemize}
        \item \hyperref[sec:proof_properly_posed]{A.1 Proof of Theorem 3.6 (Properly Posed Criterion)}
        \item \hyperref[sec:proof_vcr]{A.2 Proof of Theorem 3.7 (VCR Extraction)}
        \item \hyperref[sec:proof_ssl]{A.3 Proof of Theorem 3.8 (SSL Metric Derivation)}
        \item \hyperref[sec:proof_fsl]{A.4 Proof of Theorem 3.9 (FSL Metric Derivation)}
    \end{itemize}
    
    \item \hyperref[sec:appendix_dataset]{\textbf{Appendix B: Circuit Representation and Dataset}}
    \begin{itemize}
        \item \hyperref[sec:graph_modeling]{B.1 Circuit Graph Modeling Details}
        \item \hyperref[sec:dataset_stats]{B.2 Dataset Statistics and Composition}
    \end{itemize}
    
    \item \hyperref[sec:appendix_power_implementation]{\textbf{Appendix C: PowerGenie Implementation Details}}
    \begin{itemize}
        \item \hyperref[sec:model_arch]{C.1 Model Architecture and Pretrain Training Hyperparameters}
        \item \hyperref[sec:evo_config]{C.2 Evolutionary Finetuning Configuration}
        \item \hyperref[sec:analytical_impl]{C.3 Analytical Framework Additional Details}
    \end{itemize}

    \item \hyperref[sec:appendix_base_implementation]{\textbf{Appendix D: Baselines Implementation Details}}
    \begin{itemize}
        \item \hyperref[sec:SFT]{D.1 Supervised Fine-Tuning (SFT)}
        \item \hyperref[sec:DPO]{D.2 Direct Preference Optimization (DPO)}
        \item \hyperref[sec:PPO]{D.3 Proximal Policy Optimization (PPO)}
    \end{itemize}

    \item \hyperref[sec:appendix_spice]{\textbf{Appendix E: SPICE Simulation Details}}
    \begin{itemize}
        \item \hyperref[sec:sizing_opt]{E.1 Sizing Optimization Methodology}
        \item \hyperref[sec:sim_setup]{E.2 Simulation Setup and Parameters}
        \item \hyperref[sec:time_complexity]{E.3 Analytical Framework vs. SPICE Computational Cost Comparison}
    \end{itemize}

    \item \hyperref[sec:appendix_discovered]{\textbf{Appendix F: Best Discovered Topology Additional Details}}
    \begin{itemize}
        \item \hyperref[sec:control_scheme]{F.1 Control Scheme}
        \item \hyperref[sec:sizing]{F.2 Sizing Parameter}
    \end{itemize}

    \item \hyperref[sec:appendix_training]{\textbf{Appendix G: Best Training Topology Details}}
    \begin{itemize}
        \item \hyperref[sec:topology_training]{G.1 Topology}
        \item \hyperref[sec:control_scheme_training]{G.2 Control Scheme}
        \item \hyperref[sec:sizing_training]{G.3 Sizing Parameter}
    \end{itemize}

    \item \hyperref[sec:appendix_extended]{\textbf{Appendix H: Additional Experimental Results}}
    \begin{itemize}
        \item \hyperref[sec:multirun]{H.1 Multi-Run Statistics}
        \item \hyperref[sec:fom_distribution]{H.2 FoM Distribution of Discovered Circuits}
    \end{itemize}

\end{itemize}

\newpage

\section{Theoretical Foundations}
\label{sec:appendix_theory}

This section provides formal proofs for the theorems stated in Section~\ref{sec: Approach}. The analysis follows the network-theoretic framework developed by \citet{seeman2009design} and \citet{makowski1995performance}.

\subsection{Proof of Theorem~\ref{thm:proper}}
\label{sec:proof_properly_posed}

\begin{theorem}[Properly Posed Criterion, \citep{seeman2009design}]
A two-phase SC converter is properly posed if and only if both $\mathbf{B}_c$ and $\mathbf{Q}_c$ are square and invertible.
\end{theorem}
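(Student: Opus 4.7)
The plan is to reduce proper posedness to unique solvability of two small linear systems and then invoke standard linear algebra for both directions. Under Assumption~\ref{ass:noload}, capacitor voltages are constant across phases, so KVL must hold simultaneously in $G_{\phi_1}$ and $G_{\phi_2}$; stacking the per-phase loop matrices gives $\mathbf{B}\mathbf{v}=\mathbf{0}$ with $\mathbf{v}=[V_{IN},v_{c_1},\dots,v_{c_k},V_{OUT}]^\top$, which under the partition becomes
\begin{equation}
\mathbf{B}_c \mathbf{v}_c = -\mathbf{b}_{in}\, V_{IN}.
\end{equation}
Dually, integrating KCL on each fundamental cutset over one switching period forces the net branch charges to vanish (otherwise capacitors would not be in steady state), and normalizing the output charge to unity yields $\mathbf{Q}_c \mathbf{a} = -\mathbf{q}_{out}$. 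By Definition~\ref{def:proper}, proper posedness is exactly the requirement that both systems admit unique solutions $\mathbf{v}_c$ and $\mathbf{a}$ for any $V_{IN}$ and any unit output charge.

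The ($\Leftarrow$) direction is then immediate: if $\mathbf{B}_c$ and $\mathbf{Q}_c$ are square and invertible, the formulas $\mathbf{v}_c = -\mathbf{B}_c^{-1}\mathbf{b}_{in}V_{IN}$ and $\mathbf{a} = -\mathbf{Q}_c^{-1}\mathbf{q}_{out}$ give unique capacitor voltages and charge flows, so the converter is properly posed. For ($\Rightarrow$), suppose the converter is properly posed. Uniqueness of $\mathbf{v}_c$ forces $\ker \mathbf{B}_c = \{\mathbf{0}\}$, i.e., $\mathbf{B}_c$ has full column rank. On the row side, the spanning-tree construction guarantees that the fundamental loops within each phase are linearly independent; any additional row-rank deficiency in the stacked matrix would correspond to a cross-phase loop redundancy, leaving the system either inconsistent for generic $V_{IN}$ (contradicting existence) or decoupling a capacitor voltage from the topology (contradicting uniqueness). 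Ruling out both pathologies forces the row count to match the column count, so $\mathbf{B}_c$ is square and hence invertible. The dual argument applies to $\mathbf{Q}_c$: uniqueness of $\mathbf{a}$ yields full column rank, and independence of fundamental cutsets plus absence of cross-phase KCL contradictions forces squareness.

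The main obstacle is the squareness half of the forward direction. Trivial kernel plus consistent right-hand side gives a unique solution even for tall matrices, so neither uniqueness nor existence alone rules out $\mathbf{B}_c$ having strictly more rows than columns. The resolution is a dimension-counting argument grounded in the graph structure: the total number of fundamental loops across the two phase networks, minus the input column, must equal the number of capacitor-plus-output unknowns precisely when no phase network introduces a redundant or contradictory KVL constraint, which is the network-theoretic content of Definition~\ref{def:proper}. Making this count rigorous (and the analogous one for $\mathbf{Q}_c$ using fundamental cutsets) is the only nontrivial step; once established, invertibility is a routine consequence of combining full column rank with squareness.
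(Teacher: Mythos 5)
Your overall route matches the paper's: both proofs set up the two partitioned linear systems $\mathbf{B}_c\mathbf{v}_c=-\mathbf{b}_{in}V_{IN}$ and $\mathbf{Q}_c\mathbf{a}=-\mathbf{q}_{out}$, dispose of sufficiency in one line by solving them, and reduce necessity to (i) triviality of the kernel from uniqueness and (ii) a dimension-counting argument to force squareness. You are also right to isolate (ii) as the crux: full column rank alone does not forbid a tall $\mathbf{B}_c$ whose right-hand side happens to lie in its column space, in which case the solution is still unique and the converter would be "properly posed" without $\mathbf{B}_c$ being square. The paper treats this by asserting that an overdetermined system is "generically inconsistent," which is a modeling stance more than a proof step.

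The gap in your proposal is the cross-phase redundancy argument you use to force full \emph{row} rank. You claim that a row-rank deficiency in the stacked $\mathbf{B}$ would "leave the system either inconsistent for generic $V_{IN}$ (contradicting existence) or decoupl[e] a capacitor voltage from the topology (contradicting uniqueness)." Neither disjunct is forced: a redundant row that is linearly dependent on the others \emph{including the right-hand side} is simply inert — it neither introduces inconsistency nor changes the solution set — so a tall $\mathbf{B}_c$ with trivial kernel and consistent redundant rows would satisfy your Definition~\ref{def:proper} while failing the theorem's conclusion. You acknowledge in your final paragraph that making the count rigorous is "the only nontrivial step," but that is precisely the step that is left open; without it the necessity direction does not close. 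To complete the argument one would need to either (a) strengthen the notion of properly posed to require unique solvability under generic perturbation of the incidence data (which is effectively how \citet{seeman2009design} uses the term, and what the paper's "generically inconsistent" is implicitly invoking), or (b) give a genuine graph-theoretic count showing that the number of independent stacked loop equations involving capacitors and the output equals the number of such branches whenever a redundancy-free two-phase SC structure is present. Your sketch gestures at (b) but does not supply it.
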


\begin{proof}
We prove each direction by analyzing the linear systems arising from Kirchhoff's laws.

\textbf{Necessity.} Assume the converter is properly posed, i.e., all capacitor voltages and charge flows are uniquely determined. Under Assumption~\ref{ass:noload}, capacitor voltages remain constant across both phases, so KVL must hold simultaneously in both phase networks. Stacking the KVL equations from both phases:
\begin{equation}
    \mathbf{B}\mathbf{v} = \begin{bmatrix} \mathbf{B}^1 \\ \mathbf{B}^2 \end{bmatrix} \begin{bmatrix} V_{IN} \\ \mathbf{v}_c \end{bmatrix} = \mathbf{0}.
\end{equation}
Using the column partition $\mathbf{B} = [\mathbf{b}_{in} \mid \mathbf{B}_c]$ from Definition~\ref{def:loopmatrix}:
\begin{equation}
    \mathbf{B}_c \mathbf{v}_c = -\mathbf{b}_{in} V_{IN}.
    \label{eq:kvl_system}
\end{equation}

Let $\mathbf{B}_c \in \mathbb{R}^{m \times (k+1)}$ where $m$ is the number of KVL equations and $k+1 = \dim(\mathbf{v}_c)$. If $m < k+1$, the system is underdetermined and $\mathbf{v}_c$ is not unique. If $m > k+1$, the system is overdetermined and generically inconsistent. Since the converter is properly posed, we must have $m = k+1$, so $\mathbf{B}_c$ is square.

For square $\mathbf{B}_c$, if $\det(\mathbf{B}_c) = 0$, then $\ker(\mathbf{B}_c) \neq \{\mathbf{0}\}$. Any solution $\mathbf{v}_c^*$ would yield infinitely many solutions $\mathbf{v}_c^* + \mathbf{n}$ for $\mathbf{n} \in \ker(\mathbf{B}_c)$, contradicting uniqueness. Hence $\mathbf{B}_c$ is invertible.

An analogous argument on the KCL system $\mathbf{Q}_c \mathbf{a} = -\mathbf{q}_{out}$ (Definition~\ref{def:cutsetmatrix}) establishes that $\mathbf{Q}_c$ must also be square and invertible.

\textbf{Sufficiency.} Assume $\mathbf{B}_c$ and $\mathbf{Q}_c$ are square and invertible. Then Eq.~\ref{eq:kvl_system} has the unique solution $\mathbf{v}_c = -\mathbf{B}_c^{-1}\mathbf{b}_{in}V_{IN}$, and the KCL system has unique solution $\mathbf{a} = -\mathbf{Q}_c^{-1}\mathbf{q}_{out}$. All voltages and charge flows are uniquely determined, so the converter is properly posed.
\end{proof}

\subsection{Proof of Theorem~\ref{thm:vcr}}
\label{sec:proof_vcr}

\begin{theorem}[VCR Extraction, \citep{makowski1995performance}]
For a properly posed converter, the voltage conversion ratio is:
\begin{equation}
    M = \frac{V_{OUT}}{V_{IN}} = \left[-\mathbf{B}_c^{-1} \mathbf{b}_{in}\right]_{\textit{last}}
\end{equation}
\end{theorem}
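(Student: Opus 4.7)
The plan is to obtain the VCR formula as a nearly immediate corollary of the properly posed criterion (Theorem~\ref{thm:proper}), since all the analytical work has already been done in setting up the stacked KVL system. The key physical input is Assumption~\ref{ass:noload}: in periodic steady state with the output open, capacitor voltages are constant across the switching period, so KVL constraints valid in either phase must be simultaneously satisfied by the same voltage vector $\mathbf{v}$.

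First I would write $\mathbf{v} = [V_{IN}, v_{c_1}, \ldots, v_{c_k}, V_{OUT}]^\top$ and apply the stacked fundamental loop matrix from Definition~\ref{def:loopmatrix} to get $\mathbf{B}\mathbf{v} = \mathbf{0}$. Using the column partition $\mathbf{B} = [\mathbf{b}_{in} \mid \mathbf{B}_c]$, this rearranges to the linear system $\mathbf{B}_c \mathbf{v}_c = -\mathbf{b}_{in} V_{IN}$, where $\mathbf{v}_c = [v_{c_1}, \ldots, v_{c_k}, V_{OUT}]^\top$. Then I would invoke Theorem~\ref{thm:proper}: because the converter is properly posed, $\mathbf{B}_c$ is square and invertible, so the system admits the unique solution $\mathbf{v}_c = -\mathbf{B}_c^{-1} \mathbf{b}_{in} V_{IN}$. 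By the indexing convention placing $V_{OUT}$ last in $\mathbf{v}_c$, extracting the last coordinate and dividing by $V_{IN}$ yields $M = [-\mathbf{B}_c^{-1}\mathbf{b}_{in}]_{\textit{last}}$. Linearity of the right-hand side in $V_{IN}$ also justifies the normalization $V_{IN} = 1$ used in the main body, so $M$ is genuinely a topology-dependent constant.

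There is no substantive obstacle: the result follows mechanically from the KVL setup already established in the proof of Theorem~\ref{thm:proper}. The only care needed is bookkeeping, namely confirming that the ordering of rows and columns in the partition $[\mathbf{b}_{in} \mid \mathbf{B}_c]$ is used consistently with the ordering of $\mathbf{v}_c$, so that the ``last entry'' operator indeed selects $V_{OUT}$ rather than some capacitor voltage. If I wanted to bolster the statement, I would append a brief remark that $M$ is independent of the chosen spanning tree: any alternative spanning tree produces a loop matrix related to $\mathbf{B}$ by a unimodular row transformation, which leaves the solution set of $\mathbf{B}_c \mathbf{v}_c = -\mathbf{b}_{in} V_{IN}$ invariant and hence preserves the last-coordinate readout.
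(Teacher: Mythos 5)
Your argument is correct and mirrors the paper's own proof: both rely on the stacked KVL system $\mathbf{B}_c \mathbf{v}_c = -\mathbf{b}_{in} V_{IN}$, invoke the properly posed criterion to invert $\mathbf{B}_c$, and read off $V_{OUT}$ as the last coordinate, with linearity in $V_{IN}$ justifying the normalization. Your added remark on spanning-tree invariance via unimodular row transformations is a nice bonus the paper does not include, but the core derivation is identical.
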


\begin{proof}
Under Assumption~\ref{ass:noload}, the converter operates with open-circuited output and capacitor voltages constant across both phases. From the proof of Theorem~\ref{thm:proper}, the voltage vector satisfies:
\begin{equation}
    \mathbf{v}_c = -\mathbf{B}_c^{-1} \mathbf{b}_{in} V_{IN},
\end{equation}
where $\mathbf{v}_c = [v_{c_1}, \ldots, v_{c_k}, V_{OUT}]^\top$. Since $-\mathbf{B}_c^{-1} \mathbf{b}_{in}$ depends only on the topology (not on $V_{IN}$), all voltages scale linearly with $V_{IN}$. Extracting the last component:
\begin{equation}
    V_{OUT} = \left[-\mathbf{B}_c^{-1} \mathbf{b}_{in}\right]_{\textit{last}} \cdot V_{IN}.
\end{equation}
The conversion ratio $M = V_{OUT}/V_{IN}$ follows immediately.
\end{proof}

\subsection{Proof of Theorem~\ref{thm:ssl}}
\label{sec:proof_ssl}
We first establish two lemmas fundamental to the SSL analysis.

\begin{lemma}[Tellegen's Theorem for SC Networks, \citep{chua1987linear}]
\label{lem:tellegen}
For any network, any vector of branch voltages $\mathbf{v}$ satisfying KVL is orthogonal to any vector of branch charges $\mathbf{q}$ satisfying KCL: $\mathbf{v}^\top \mathbf{q} = 0$.
\end{lemma}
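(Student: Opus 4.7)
The plan is to realize KVL and KCL as orthogonality conditions against mutually complementary subspaces of the branch space, and then conclude by a one-line inner-product computation. I would work with the reduced node-branch incidence matrix $\mathbf{A}$ of the underlying directed graph $G$ (after grounding one reference node), and invoke the classical graph-theoretic fact that the null space of $\mathbf{A}$ equals the cycle space of $G$ and is the orthogonal complement of the row space of $\mathbf{A}$ inside $\mathbb{R}^b$.

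First I would handle the KVL side: any branch voltage vector satisfying $\mathbf{B}\mathbf{v} = \mathbf{0}$ (Definition~\ref{def:loopmatrix}) lies in the orthogonal complement of the cycle space, hence in the row space of $\mathbf{A}$, so there exists a node-potential vector $\mathbf{n}$ with $\mathbf{v} = \mathbf{A}^\top \mathbf{n}$. This is the standard realization of a KVL-consistent voltage as a difference of node potentials. Next I would handle KCL: integrating the instantaneous conservation law $\mathbf{A}\mathbf{i}(t) = \mathbf{0}$ over one switching period gives $\mathbf{A}\mathbf{q} = \mathbf{0}$, placing the branch charge vector in the null space of $\mathbf{A}$. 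Combining the two,
\begin{equation}
    \mathbf{v}^\top \mathbf{q} \;=\; (\mathbf{A}^\top \mathbf{n})^\top \mathbf{q} \;=\; \mathbf{n}^\top (\mathbf{A}\mathbf{q}) \;=\; 0.
\end{equation}

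The main obstacle is handling the two-phase SC setting cleanly in the first step, since the branches that conduct differ between phases and $\mathbf{v}$ must satisfy KVL in both phase subgraphs simultaneously. Under Assumption~\ref{ass:noload} the capacitor voltages are steady, so a single node-potential assignment $\mathbf{n}$ is consistent across both phases, and one can apply the classical single-graph argument to the union network after zero-padding absent branches. A secondary, purely bookkeeping point is that $\mathbf{v}$ and $\mathbf{q}$ must be indexed over a common branch set, with a branch that is absent in a given phase contributing zero to the inner product; this preserves orthogonality term-by-term and completes the argument.
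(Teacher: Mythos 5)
Your proof is correct and is the standard argument: KVL places $\mathbf{v}$ in the cut space (row space of $\mathbf{A}$), KCL places $\mathbf{q}$ in the cycle space (null space of $\mathbf{A}$), and these are orthogonal complements, so $\mathbf{v}^\top\mathbf{q}=\mathbf{n}^\top\mathbf{A}\mathbf{q}=0$. The paper does not actually prove this lemma---it is stated as a citation to \citet{chua1987linear} and used without proof in the proof of the SSL theorem---so there is no in-paper argument to compare against; your derivation supplies exactly the missing classical justification.

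One small remark on scope: the lemma as stated is a purely graph-theoretic claim about \emph{a single} network, and the paper applies it once per phase in the proof of Theorem~\ref{thm:ssl} (i.e., $(\mathbf{v}^{(j)})^\top\mathbf{q}^{(j)}=0$ separately for $j=1,2$, then summed). Your final paragraph worries about simultaneously satisfying KVL in both phase subgraphs and about zero-padding absent branches, which is not required for the lemma itself---that bookkeeping belongs to the application in Theorem~\ref{thm:ssl}, not to the orthogonality statement. The remark is not wrong, but you could delete it and the proof of the lemma would still be complete and cleaner. Everything else (realizing a KVL-consistent $\mathbf{v}$ as $\mathbf{A}^\top\mathbf{n}$, integrating $\mathbf{A}\mathbf{i}(t)=\mathbf{0}$ over a period to get $\mathbf{A}\mathbf{q}=\mathbf{0}$, and the one-line inner-product computation) is exactly right.
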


\begin{lemma}[Charge Conservation]
\label{lem:charge_conservation}
For each capacitor $i$ in a two-phase SC converter operating in periodic steady state, the charge multipliers satisfy $a_{c,i}^{(1)} + a_{c,i}^{(2)} = 0$, hence $|a_{c,i}^{(1)}| = |a_{c,i}^{(2)}| =: |a_{c,i}|$.
\end{lemma}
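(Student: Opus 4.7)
The plan is to derive the identity as a direct consequence of the periodic steady-state assumption combined with the basic constitutive relation $q=Cv$. No further network theory (no Tellegen, no loop/cutset matrices) is actually needed here; Lemma~\ref{lem:charge_conservation} is a conservation statement, not a KVL/KCL statement.

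First, I would fix a sign convention: for each capacitor $i$, choose a single reference orientation that is shared between the two phase networks, and let $\Delta q_i^{(\phi)}$ denote the net charge injected into capacitor $i$ during phase $\phi \in \{1,2\}$ with that convention. By Assumption~\ref{ass:noload}, the converter is in periodic steady state, so $v_{c,i}$ at the end of the period equals $v_{c,i}$ at the start. Applying $q_i = C_i v_{c,i}$ to the endpoints of one period gives
\begin{equation}
    \Delta q_i^{(1)} + \Delta q_i^{(2)} \;=\; C_i\bigl(v_{c,i}(T) - v_{c,i}(0)\bigr) \;=\; 0.
\end{equation}

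Next, I would bring in the charge-multiplier definition. Recall that $a_{c,i}^{(\phi)}$ is the charge into capacitor $i$ during phase $\phi$ normalized by the output charge per period $q_{\mathrm{out}}$ (which is nonzero in any nontrivial converter). Dividing the previous equation by $q_{\mathrm{out}}$ yields
\begin{equation}
    a_{c,i}^{(1)} + a_{c,i}^{(2)} \;=\; 0,
\end{equation}
so $a_{c,i}^{(1)} = -a_{c,i}^{(2)}$, and taking absolute values gives $|a_{c,i}^{(1)}| = |a_{c,i}^{(2)}|$, which is precisely the common magnitude denoted $|a_{c,i}|$ in the main text.

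The main obstacle is not mathematical but notational: the $a_{c,i}^{(\phi)}$ appearing in the lemma are phase-specific multipliers, whereas the full vector $\mathbf{a}$ constructed in Section~\ref{sec: Approach} via $\mathbf{Q}_c \mathbf{a} = -\mathbf{q}_{\mathrm{out}}$ assigns one number per branch. I would therefore include a short preliminary paragraph making explicit that, for a capacitor branch common to both phase cutset systems, the two entries differ only by sign under the shared orientation, so the two formalisms agree and the magnitude $|a_{c,i}|$ is well-defined. With that bookkeeping pinned down, the proof reduces to the two displayed equations above.
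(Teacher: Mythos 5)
Your proposal is correct and matches the paper's proof in substance: both rest on the periodic steady-state observation that the net charge accumulated on each capacitor over a full cycle vanishes, and then divide by $q_{\mathrm{out}}$ to pass to normalized charge multipliers. Your version is merely a bit more explicit (spelling out $q=Cv$ at the period endpoints and flagging the sign-convention bookkeeping), but it is the same argument.
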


\begin{proof}
Under Assumption~\ref{ass:noload}, the converter operates in periodic steady state, so net charge on each capacitor over a complete cycle must be zero. Since $a_{c,i}^{(j)} := \Delta q_{c,i}^{(j)}/q_{out}$ denotes the normalized charge transfer during phase $j$, the constraint $a_{c,i}^{(1)} + a_{c,i}^{(2)} = 0$ follows immediately.
\end{proof}

\begin{theorem}[SSL Metric, \citep{seeman2008analysis}]
For a properly posed two-phase converter with capacitor charge multipliers $\mathbf{a}_c$ and capacitor voltages $\mathbf{v}_c$ (normalized to $V_{IN}$), the SSL performance metric is:
\begin{equation}
    M_{SSL} = \frac{2M^2}{\left(\sum_{i \in \textit{caps}} |a_{c,i}| \cdot |v_{c,i}|\right)^2}
\end{equation}
\end{theorem}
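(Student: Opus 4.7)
The plan is to derive $M_{SSL}$ in four steps: (i) express the per-cycle energy dissipated by each capacitor under the impulsive charge-transfer assumption; (ii) convert total dissipation into an effective output impedance $R_{SSL}$; (iii) optimize over capacitor sizes via Cauchy-Schwarz to eliminate the dependence on component values; and (iv) normalize into the sizing-free figure of merit stated in the theorem.

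For step (i), I would invoke Lemma~\ref{lem:charge_conservation} to conclude that the magnitude of charge flowing into capacitor $i$ during each of the two phases equals $|a_{c,i}|\,q_{out}$. Under Assumption~\ref{ass:limits}, each impulsive charge redistribution through a finite-but-vanishing resistance costs $\tfrac{1}{2}(a_{c,i}q_{out})^{2}/C_{i}$ of energy (the classical capacitor-charging loss, which is independent of the resistance value). Summing over the two transitions per cycle and over all capacitors yields $E_{loss} = q_{out}^{2}\sum_{i}a_{c,i}^{2}/C_{i}$. Multiplying by $f_{sw}$ and matching to $I_{out}^{2}R_{SSL}$ with $I_{out} = q_{out}f_{sw}$ gives the impedance formula $R_{SSL} = f_{sw}^{-1}\sum_{i}a_{c,i}^{2}/C_{i}$.

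For steps (iii) and (iv), I would impose a fixed total capacitor voltage-charge budget $E_{tot} = \sum_{i}C_{i}v_{c,i}^{2}V_{IN}^{2}$ (normalizing out silicon area) and apply Cauchy-Schwarz:
$$\Bigl(\sum_{i}\tfrac{a_{c,i}^{2}}{C_{i}}\Bigr)\Bigl(\sum_{i}C_{i}v_{c,i}^{2}\Bigr)\ \ge\ \Bigl(\sum_{i}|a_{c,i}|\,|v_{c,i}|\Bigr)^{2},$$
with equality precisely when $C_{i}\propto|a_{c,i}|/|v_{c,i}|$ --- exactly the optimal sizing rule quoted after the theorem. This yields $R_{SSL}^{\min} = V_{IN}^{2}\bigl(\sum_{i}|a_{c,i}|\,|v_{c,i}|\bigr)^{2}/(E_{tot}f_{sw})$. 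Defining $M_{SSL}$ as the dimensionless ratio of deliverable output power $V_{OUT}^{2}/R_{SSL}^{\min}$ to the capacitor power budget $E_{tot}f_{sw}$, and using $V_{OUT} = MV_{IN}$ to cancel $V_{IN}^{2}$, produces the stated formula up to an overall constant.

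The hard part is bookkeeping that overall constant --- specifically, tracking the factor of $2$ in the numerator. Cauchy-Schwarz is routine and the energy-dissipation computation is standard, but whether the $2$ sits in the numerator or is absorbed into the definition of $E_{tot}$ depends on whether one normalizes by stored energy $\tfrac{1}{2}\sum C_{i}V_{i}^{2}$ or by the charge-voltage product $\sum C_{i}V_{i}^{2}$, and similarly on whether Lemma~\ref{lem:charge_conservation} is applied once (net per cycle) or twice (once per phase transition). I would fix the convention to match Seeman 2008 and sanity-check on a canonical $2{:}1$ series-parallel converter, where $M$, $a_{c,i}$, and $v_{c,i}$ are known in closed form and $M_{SSL}$ has a tabulated value.
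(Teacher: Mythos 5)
Your proposal reaches the stated formula by a route that is closely related to, but not identical with, the paper's: the paper derives $R_{SSL}=\sum_i a_{c,i}^2/(C_i f_{sw})$ by applying Tellegen's theorem (Lemma~\ref{lem:tellegen}) to the converter with input shorted, expanding $V_{OUT}=-\sum_{i,j}a_{c,i}^{(j)}\Delta v_{c,i}^{(j)}$ and using the capacitor constitutive law, whereas you obtain the same $R_{SSL}$ by asserting the per-transition dissipation $\tfrac12(\Delta q_i)^2/C_i$ for each capacitor and summing over the two phase transitions. Be aware that this per-transition loss formula is elementary for a single capacitor charged from a stiff source, but its validity across a general multi-capacitor redistribution network is itself a Tellegen-type fact (the integrated dissipation equals source-delivered energy minus stored-energy change, and Tellegen is what pins that down to $\sum_i a_{c,i}^2 q_{out}^2/C_i$ over a cycle); invoking it as ``classical'' quietly imports the step the paper does explicitly via Lemma~\ref{lem:tellegen}. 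Your Cauchy--Schwarz optimization is a cleaner packaging of the paper's Lagrange-style derivation and correctly recovers the optimal sizing $C_i^*\propto|a_{c,i}|/|v_{c,i}|$ quoted in Eq.~(\ref{eq:optimal_cap_ssl}); the two are mathematically identical.

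On the floating constant: you have correctly located it. The paper fixes the normalization by taking the cost functional to be stored energy $E_{total}=\sum_i\tfrac12 C_i v_{c,i}^2$, substituting the optimized impedance $R_{SSL}^*=(\sum|a_{c,i}||v_{c,i}|)^2/(2E_{total}f_{sw})$ into $M_{SSL}:=\dfrac{V_{OUT}^2/R_{SSL}^*}{E_{total}f_{sw}}$, and normalizing $V_{IN}=1$; the factor of $2$ in the numerator is exactly the $\tfrac12$ from $E_{c,i}=\tfrac12C_i v_{c,i}^2$ surviving through the substitution. Your alternative budget $E_{tot}=\sum_i C_i v_{c,i}^2 V_{IN}^2$ would give $M^2/(\sum|a||v|)^2$ without the $2$, so to match Seeman's (and the paper's) convention you must use the $\tfrac12$ stored-energy normalization. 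Had you carried that through, your derivation closes without any remaining ambiguity; the sanity check on a tabulated $2{:}1$ series-parallel converter you propose is a good way to confirm the convention once fixed.
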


\begin{proof}
The proof proceeds in three steps: deriving the SSL output impedance, establishing optimal capacitor sizing, and extracting the topology-dependent metric.

\textbf{Step 1: SSL Output Impedance via Tellegen's Theorem.}

Under Assumption~\ref{ass:limits}, switch resistances are negligible and charge transfers are impulsive. We derive the output impedance by analyzing power balance using Tellegen's theorem. Consider the converter with input short-circuited ($V_{IN} = 0$) and output connected to an independent voltage source $V_{OUT}$.

By Lemma~\ref{lem:tellegen}, in each phase $j$, the branch voltage vector $\mathbf{v}^{(j)}$ and charge flow vector $\mathbf{q}^{(j)}$ satisfy $(\mathbf{v}^{(j)})^\top \mathbf{q}^{(j)} = 0$. Summing over both phases represents the total energy balance:
\begin{equation}
    \sum_{j=1}^{2} (\mathbf{v}^{(j)})^\top \mathbf{q}^{(j)} = 0.
\end{equation}
Substituting charge multipliers $\mathbf{q}^{(j)} = \mathbf{a}^{(j)} q_{out}$ and separating the output source term from the capacitor terms:
\begin{equation}
    V_{OUT} \cdot (a_{out}^{(1)} + a_{out}^{(2)}) \cdot q_{out} + \sum_{i \in \textit{caps}} \sum_{j=1}^{2} v_{c,i}^{(j)} \cdot a_{c,i}^{(j)} q_{out} = 0.
\end{equation}
Since the total normalized charge delivered to the output is unity ($a_{out}^{(1)} + a_{out}^{(2)} = 1$), the first term simplifies to $V_{OUT} \cdot q_{out}$. To obtain a dimensionally consistent voltage equation, we divide the entire expression by $q_{out}$:
\begin{equation}
    V_{OUT} + \sum_{i \in \textit{caps}} \sum_{j=1}^{2} a_{c,i}^{(j)} v_{c,i}^{(j)} = 0.
    \label{eq:tellegen_expanded}
\end{equation}

To simplify the capacitor terms, write $v_{c,i}^{(j)} = v_{c,i}^{(1)} + \Delta v_{c,i}^{(j)}$, where $\Delta v_{c,i}^{(j)}$ denotes the voltage change from phase 1. Substituting this into Eq.~\ref{eq:tellegen_expanded}:
\begin{equation}
    V_{OUT} + \sum_{i \in \textit{caps}} v_{c,i}^{(1)} \underbrace{\sum_{j=1}^{2} a_{c,i}^{(j)}}_{= 0} + \sum_{i \in \textit{caps}} \sum_{j=1}^{2} a_{c,i}^{(j)} \Delta v_{c,i}^{(j)} = 0.
\end{equation}
The middle term vanishes by charge conservation (Lemma~\ref{lem:charge_conservation}), leaving:
\begin{equation}
    V_{OUT} = -\sum_{i \in \textit{caps}} \sum_{j=1}^{2} a_{c,i}^{(j)} \Delta v_{c,i}^{(j)}.
    \label{eq:vout_delta}
\end{equation}

For a two-phase converter, the cumulative voltage change on capacitor $i$ at phase 2 is determined by the charge delivered divided by capacitance:
\begin{equation}
    \Delta v_{c,i}^{(2)} = \frac{a_{c,i}^{(2)} q_{out}}{C_i}.
\end{equation}
Substituting this back into Eq.~\ref{eq:vout_delta} reintroduces $q_{out}$ correctly through the constitutive capacitor relationship:
\begin{equation}
    V_{OUT} = -\sum_{i \in \textit{caps}} \frac{(a_{c,i}^{(2)})^2}{C_i} q_{out}.
    \label{eq:vout_qout}
\end{equation}

Finally, the voltage drop $|V_{OUT}|$ arises from output current $i_{out}$ flowing through an effective resistance. Using $q_{out} = i_{out}/f_{sw}$:
\begin{equation}
    R_{SSL} = \sum_{i \in \textit{caps}} \frac{a_{c,i}^2}{C_i f_{sw}}.
    \label{eq:rssl_general}
\end{equation}

\textbf{Step 2: Optimal Capacitor Sizing.}

To enable topology comparison independent of absolute component values, we derive the minimum $R_{SSL}$ under optimal sizing. For capacitors, the relevant cost metric is energy storage $E_{c,i} = \frac{1}{2} C_i v_{c,i}^2$. Minimizing Eq.~\ref{eq:rssl_general} subject to fixed total energy storage yields:
\begin{equation}
    C_i^* = \frac{2 E_{total}}{|v_{c,i}|} \cdot \frac{|a_{c,i}|}{\sum_k |a_{c,k}| \cdot |v_{c,k}|}.
    \label{eq:optimal_cap_ssl}
\end{equation}

\textbf{Step 3: SSL Performance Metric.}

Substituting Eq.~\ref{eq:optimal_cap_ssl} into Eq.~\ref{eq:rssl_general} yields the optimized SSL output impedance:
\begin{equation}
    R_{SSL}^* = \frac{\left(\sum_{i \in \textit{caps}} |a_{c,i}| \cdot |v_{c,i}|\right)^2}{2 E_{total} f_{sw}}.
    \label{eq:rssl_opt}
\end{equation}
Following~\citet{seeman2008analysis}, we define a topology-comparison metric as the ratio of the converter's power-handling capability to its capacitor resource cost:
\begin{equation}
    M_{SSL} := \frac{V_{OUT}^2 / R_{SSL}^*}{E_{total} \cdot f_{sw}}.
\end{equation}
Adopting the standard normalization convention $V_{IN} = 1$ (consistent with Theorem~\ref{thm:vcr}), capacitor voltages $v_{c,i}$ are expressed relative to the input voltage, and $V_{OUT} = M$ directly. Substituting yields:
\begin{equation}
    M_{SSL} = \frac{2M^2}{\left(\sum_{i \in \textit{caps}} |a_{c,i}| \cdot |v_{c,i}|\right)^2}.
\end{equation}
\end{proof}

\subsection{Proof of Theorem~\ref{thm:fsl}}
\label{sec:proof_fsl}

\begin{lemma}[Switch Current]
\label{lem:switch_current}
For a two-phase converter with duty cycle $D = 1/2$, the current through switch $i$ when conducting is $i_{r,i} = 2 a_{r,i} \cdot i_{out}$, where $a_{r,i} := q_{r,i}/q_{out}$ is the switch charge multiplier.
\end{lemma}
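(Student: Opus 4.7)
The plan is to derive the switch current by an averaging argument over one switching period, leveraging the fact that a switch only carries charge during the phase in which it is closed. Under Assumption~\ref{ass:limits} (FSL regime), capacitor voltages are treated as constant across a cycle, so each closed switch sees a constant voltage across the conducting network and therefore conducts a constant current while on; the transferred charge is simply current times on-time.

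First I would fix notation: let $T = 1/f_{\textit{sw}}$ denote the switching period, and let $q_{r,i}$ be the total charge that flows through switch $i$ during one complete cycle. With duty cycle $D = 1/2$, switch $i$ is closed for exactly $T/2$ and open for the remainder, so its average conducting current is
\begin{equation}
    i_{r,i} \;=\; \frac{q_{r,i}}{T/2} \;=\; 2 \, q_{r,i} \, f_{\textit{sw}}.
\end{equation}
Next, by the usual steady-state relation between delivered charge and output current, $i_{\textit{out}} = q_{\textit{out}}/T = q_{\textit{out}} \, f_{\textit{sw}}$. Dividing the two expressions eliminates $f_{\textit{sw}}$ and yields $i_{r,i}/i_{\textit{out}} = 2\,q_{r,i}/q_{\textit{out}}$, which is $2 a_{r,i}$ by the definition $a_{r,i} := q_{r,i}/q_{\textit{out}}$. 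Rearranging gives the claimed identity $i_{r,i} = 2 a_{r,i}\, i_{\textit{out}}$.

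The only subtlety, and the step I would be most careful to justify, is the identification of the \emph{conducting} current with the time-averaged conducting current. This identification is valid precisely because Assumption~\ref{ass:limits} puts us in the FSL regime, where capacitor voltages are effectively constant over a cycle and switch losses are resistive; hence each closed switch has a constant terminal voltage driving a constant current through its on-resistance, so the instantaneous and average values coincide. Without this assumption (for example, in the SSL regime with impulsive charge transfers), $i_{r,i}$ would not be well-defined as a DC value and only its time integral $q_{r,i}$ would be meaningful. I would state this caveat briefly and then conclude, since the remaining manipulation is elementary.
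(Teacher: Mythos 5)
Your proof is correct and follows essentially the same route as the paper: divide the per-cycle switch charge $q_{r,i}$ by the on-time $D/f_{\textit{sw}} = 1/(2f_{\textit{sw}})$ and combine with $q_{\textit{out}} = i_{\textit{out}}/f_{\textit{sw}}$. The paper's proof is just the one-line version of your calculation; your added remark about why the conducting current can be treated as a constant under Assumption~\ref{ass:limits} is a correct and slightly more careful justification of the implicit identification the paper makes, but it does not change the argument.
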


\begin{proof}
Switch $i$ conducts charge $q_{r,i} = a_{r,i} q_{out}$ during on-time $D/f_{sw}$. With $q_{out} = i_{out}/f_{sw}$:
\begin{equation}
    i_{r,i} = \frac{q_{r,i}}{D/f_{sw}} = \frac{a_{r,i} \cdot i_{out}/f_{sw}}{1/(2f_{sw})} = 2a_{r,i} i_{out}.
\end{equation}
\end{proof}

\begin{theorem}[FSL Metric, \citep{seeman2008analysis}]
For a properly posed two-phase converter with switch charge multipliers $\mathbf{a}_r$ and blocking voltages $\mathbf{v}_r$ (normalized to $V_{IN}$), the FSL performance metric is:
\begin{equation}
    M_{FSL} = \frac{M^2}{2\left(\sum_{i \in \textit{sw}} |a_{r,i}| \cdot |v_{r,i}|\right)^2}
\end{equation}
\end{theorem}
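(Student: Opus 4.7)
The plan is to follow the same three-step template used for the SSL metric in Theorem~\ref{thm:ssl}: first derive the FSL output impedance from switch conduction losses, then optimize over switch sizes under a fixed switch-cost budget, and finally recast the optimum as a dimensionless topology metric by normalizing $V_{IN}=1$.

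First I would use Lemma~\ref{lem:switch_current} to compute the average conduction loss in each switch. Under Assumption~\ref{ass:limits}, capacitor voltages are essentially constant, so the on-state current through switch $i$ equals $i_{r,i}=2a_{r,i}\,i_{out}$ for duty $D=1/2$. Since the switch is on for a fraction $D$ of the period, its average dissipation is $P_i = D\,R_i\,i_{r,i}^{2} = 2R_i\,a_{r,i}^{2}\,i_{out}^{2}$. Summing over all switches and matching the form $P_{\textit{tot}}=R_{FSL}\,i_{out}^{2}$ gives
\begin{equation}
R_{FSL} \;=\; 2\sum_{i\in\textit{sw}} R_i\,a_{r,i}^{2}.
\end{equation}

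Next I would optimize switch sizing, mirroring Step~2 of the SSL proof. The standard MOSFET-area cost for a switch is proportional to $v_{r,i}^{2}/R_i$ (the $R_{\textit{on}}V_{\textit{block}}^{2}$ figure of merit from~\citet{seeman2008analysis}), so I fix the total switch cost $G_{\textit{tot}}=\sum_i v_{r,i}^{2}/R_i$ and minimize $R_{FSL}$ over $\{R_i\}$. A one-line Lagrange-multiplier computation yields the optimal allocation $R_i^{*}\propto |v_{r,i}|/|a_{r,i}|$; substituting back gives
\begin{equation}
R_{FSL}^{*} \;=\; \frac{2\!\left(\sum_{i\in\textit{sw}} |a_{r,i}|\,|v_{r,i}|\right)^{\!2}}{G_{\textit{tot}}},
\end{equation}
which is the FSL analogue of Eq.~\ref{eq:rssl_opt}.

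Finally I would define the topology-only figure of merit in the same spirit as SSL, namely the ratio of deliverable output power to consumed switch resource, $M_{FSL}:=(V_{OUT}^{2}/R_{FSL}^{*})/G_{\textit{tot}}$. The $G_{\textit{tot}}$ factors cancel, and adopting the normalization $V_{IN}=1$ (so that Theorem~\ref{thm:vcr} gives $V_{OUT}=M$ and $v_{r,i}$ are expressed per unit input) reduces the expression to the claimed $M_{FSL}=M^{2}/[2(\sum_{i\in\textit{sw}}|a_{r,i}||v_{r,i}|)^{2}]$. The only subtle step is the cost model in the Lagrangian: one must use $v_{r,i}^{2}/R_i$ (not $R_i$ or $1/R_i$ alone) to recover the symmetric $|a_{r,i}|\cdot|v_{r,i}|$ weighting; all other manipulations are routine once Lemma~\ref{lem:switch_current} has been invoked and the duty cycle has been fixed at $D=1/2$.
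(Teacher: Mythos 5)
Your proposal is correct and mirrors the paper's proof step for step: Lemma~\ref{lem:switch_current} gives $R_{FSL}=2\sum_i R_i a_{r,i}^2$, the G-$V^2$ cost (your $G_{\textit{tot}}=\sum_i v_{r,i}^2/R_i$ is the paper's $X_{total}=\sum_i G_i v_{r,i}^2$), Lagrange optimization yields $R_i^*\propto|v_{r,i}|/|a_{r,i}|$ and hence $R_{FSL}^*=2\left(\sum_i|a_{r,i}||v_{r,i}|\right)^2/X_{total}$, and the definition $M_{FSL}=(V_{OUT}^2/R_{FSL}^*)/X_{total}$ with $V_{IN}=1$ closes the argument. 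Only the notation differs; the decomposition, cost model, and normalization are the same.
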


\begin{proof}
The proof proceeds in three steps, paralleling that of Theorem~\ref{thm:ssl}.

\textbf{Step 1: FSL Output Impedance.}

Switch charge multipliers $a_{r,i}$ are determined from capacitor charge multipliers via KCL at switch nodes. Under Assumption~\ref{ass:limits}, the average power dissipated in switch $i$ with on-resistance $R_i$ is:
\begin{equation}
    P_{r,i} = D \cdot R_i \cdot i_{r,i}^2 = \frac{1}{2} R_i (2a_{r,i} i_{out})^2 = 2R_i a_{r,i}^2 i_{out}^2.
\end{equation}

Equating total switch loss to the apparent loss through the output impedance $\sum P_{r,i} = R_{FSL} \cdot i_{out}^2$, we get:
\begin{equation}
    R_{FSL} = \sum_{i \in \textit{sw}} 2R_i a_{r,i}^2 = \sum_{i \in \textit{sw}} \frac{2a_{r,i}^2}{G_i},
    \label{eq:rfsl_general}
\end{equation}
where $G_i = 1/R_i$ is the switch conductance.

\textbf{Step 2: Optimal Switch Sizing.}

To enable topology comparison independent of absolute component values, we derive the minimum $R_{FSL}$ under optimal sizing. For switches, the cost metric is the G-V$^2$ product $X_{r,i} = G_i v_{r,i}^2$. Minimizing Eq.~\ref{eq:rfsl_general} subject to fixed total cost yields:
\begin{equation}
    G_i^* = \frac{X_{total}}{|v_{r,i}|} \cdot \frac{|a_{r,i}|}{\sum_k |a_{r,k}| \cdot |v_{r,k}|}.
    \label{eq:optimal_switch_fsl}
\end{equation}

\textbf{Step 3: FSL Performance Metric.}

Substituting Eq.~\ref{eq:optimal_switch_fsl} into Eq.~\ref{eq:rfsl_general} yields the optimized FSL output impedance:
\begin{equation}
    R_{FSL}^* = \frac{2\left(\sum_{i \in \textit{sw}} |a_{r,i}| \cdot |v_{r,i}|\right)^2}{X_{total}}.
    \label{eq:rfsl_opt}
\end{equation}
We define the topology-comparison metric as the ratio of the converter's power-handling capability to its switch resource cost:
\begin{equation}
    M_{FSL} := \frac{V_{OUT}^2 / R_{FSL}^*}{X_{total}}.
\end{equation}
Adopting the standard normalization convention $V_{IN} = 1$ (consistent with Theorem~\ref{thm:vcr}), switch blocking voltages $v_{r,i}$ are expressed relative to the input voltage, and $V_{OUT} = M$ directly. Substituting yields:
\begin{equation}
    M_{FSL} = \frac{M^2}{2\left(\sum_{i \in \textit{sw}} |a_{r,i}| \cdot |v_{r,i}|\right)^2}.
\end{equation}
\end{proof}

\section{Circuit Representation and Dataset}
\label{sec:appendix_dataset}
\subsection{Circuit Graph Modeling Details}
\label{sec:graph_modeling}

PowerGenie builds upon the graph modeling foundations established by AnalogGenie~\citep{anonymous2024analoggenie, gaoanaloggenie}, while introducing a novel control scheme representation specifically designed for reconfigurable power converters.

\paragraph{Device-Pin Graph Representation.}
We represent circuit topologies as device-pin graphs where each node corresponds to a specific device pin (e.g., \texttt{NM1\_D}, \texttt{NM1\_G}, \texttt{NM1\_S}, \texttt{NM1\_B} for an NMOS transistor). This pin-level representation ensures a unique one-to-one mapping between graphs and circuit topologies, eliminating the ambiguity present in device-level representations where edge-to-pin assignments remain undefined.

\paragraph{Eulerian Circuit Sequentialization.}
To enable autoregressive generation, we convert circuit graphs into sequences using Eulerian circuits. Given a finite connected undirected graph $\mathcal{G} = (V, E)$, we construct a directed graph $\mathcal{D} = (V, A)$ by replacing each undirected edge $\{u, v\} \in E$ with directed arcs. We then apply depth-first search to find the shortest closed path visiting every directed edge exactly once.

\paragraph{Compact Control Scheme Representation.}
The primary graph modeling innovation in PowerGenie addresses the combinatorial complexity of control schemes in reconfigurable converters. An 8-mode, two-phase converter requires 16-bit control schemes per switch---specifying on/off states across all mode-phase combinations. Na\"ively encoding all $2^{16} = 65{,}536$ possible control configurations as distinct tokens would create prohibitive tokenizer overhead and severely limit generalization. Instead, PowerGenie decomposes control complexity into graph connectivity. We define only 16 control tokens, \texttt{VCONT1} through \texttt{VCONT16}, each representing a single bit position in the control scheme. A switch's control scheme is then represented by connecting the switch node to only the \texttt{VCONT} nodes corresponding to its active (logic-1) bits. For example, consider a switch with control scheme \texttt{0001010100101010} (bits indexed 1--16 from right). This scheme has logic-1 at positions 2, 4, 6, 8, 10, 12, and 15. In the graph representation, we connect the switch to nodes \texttt{VCONT2}, \texttt{VCONT4}, \texttt{VCONT6}, \texttt{VCONT8}, \texttt{VCONT10}, \texttt{VCONT12}, and \texttt{VCONT15}. Figure~\ref{fig:control_signal_encoding} illustrates this encoding scheme for a representative switch in an 8-mode converter.

\begin{figure}[H]
    \centering
    \includegraphics[width=0.6\linewidth]{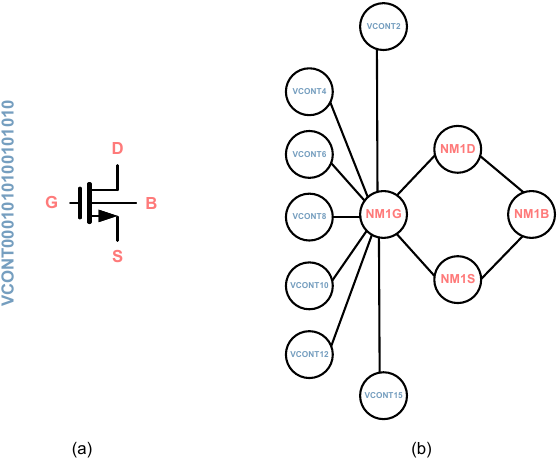}
     \caption{(a) A switch (NMOS transistor NM1) with 16-bit control scheme \texttt{0001010100101010} specifying on/off states across 8 modes and 2 phases. (b) Graph representation: device pins (NM1D, NM1G, NM1S, NM1B) form a cycle, and the gate node connects to \texttt{VCONT} nodes at active bit positions (2, 4, 6, 8, 10, 12, 15). This decomposition reduces tokenizer complexity from $2^{16} = 65{,}536$ to just 16 tokens while preserving full control scheme expressiveness.}
    \label{fig:control_signal_encoding}
\end{figure}

\subsection{Dataset Statistics and Composition}
\label{sec:dataset_stats}

PowerGenie expands the dataset from~\citep{anonymous2024analoggenie} to 11,837 unique circuit topologies across 11+ analog circuit types. Among these, 3,824 are reconfigurable power converters, of which 432 are 8-mode converters targeting VCRs of 1/5, 1/4, 1/3, 1/2, 2/3, 3/4, 4/5, and 1, labeled with FoM for performance-driven finetuning.

\paragraph{Device Count Distribution.}
Figure~\ref{fig:device_distribution} shows the distribution of device counts across all circuit topologies. The majority of circuits contain 21--30 devices (4,641 circuits), with substantial representation in the 11--20 (1,772) and 31--40 (1,639) device ranges. The dataset includes circuits ranging from simple designs with fewer than 10 devices to complex topologies with up to 90 devices, providing diverse training examples for scalable generation.

\paragraph{8-Mode Converter FoM Distribution.}
Figure~\ref{fig:fom_distribution} presents the FoM distribution for the 432 8-mode reconfigurable power converters used in evolutionary finetuning. The distribution has mean 0.074 and standard deviation 0.124, with FoM values ranging from $-0.351$ to 0.263. The best topology in the training set achieves FoM = 0.263, which serves as the baseline for evaluating whether PowerGenie can discover superior designs. As reported in Section~\ref{sec: Results}, evolutionary finetuning discovers topologies with FoM = 0.323, substantially exceeding this training set maximum.

\begin{figure}[H]
    \centering
    \includegraphics[width=0.6\linewidth]{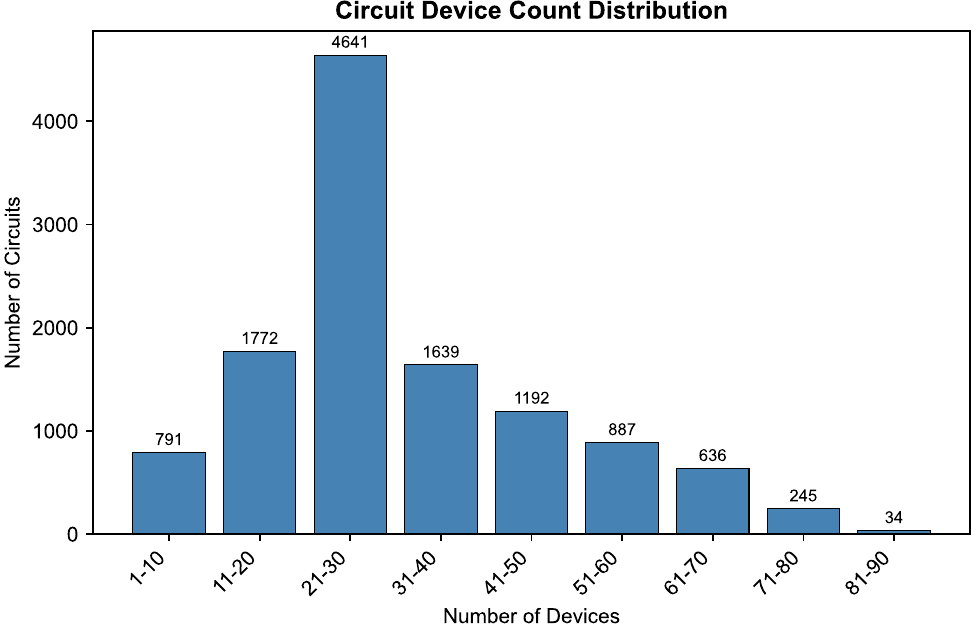}
    \caption{Distribution of device counts across all 11,837 circuit topologies. Most circuits contain 21--30 devices, with the dataset spanning from simple ($<$10 devices) to complex ($>$80 devices) designs.}
    \label{fig:device_distribution}
\end{figure}

\begin{figure}[H]
    \centering
    \includegraphics[width=0.6\linewidth]{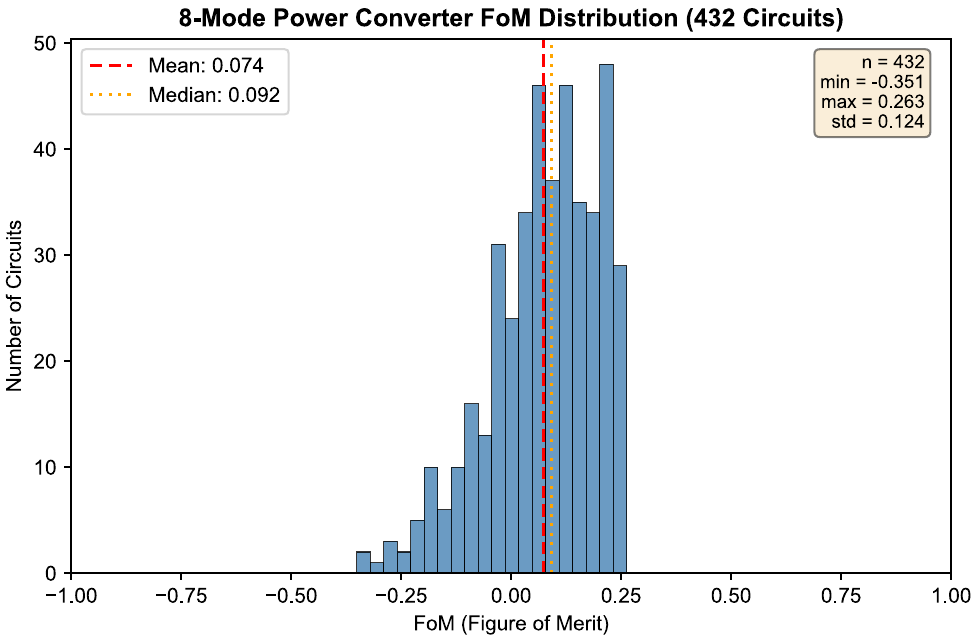}
    \caption{FoM distribution for the 432 8-mode reconfigurable power converters.}
    \label{fig:fom_distribution}
\end{figure}

\section{PowerGenie Implementation Details}
\label{sec:appendix_power_implementation}
\subsection{Model Architecture and Pretraining Hyperparameters}
\label{sec:model_arch}

\paragraph{Model Architecture.}
PowerGenie uses a decoder-only GPT architecture for autoregressive circuit topology generation. Table~\ref{tab:model_arch} summarizes the architectural configuration. The model employs pre-normalization (LayerNorm before attention and feed-forward layers), GELU activations, and weight tying between the token embedding and output projection layers. Flash attention is enabled when available for efficient training. Special scaled initialization is applied to residual projection weights, scaling by $1/\sqrt{2L}$ where $L$ is the number of layers.

\begin{table}[H]
\centering
\caption{GPT model architecture configuration.}
\label{tab:model_arch}
\begin{tabular}{lc}
\toprule
\textbf{Hyperparameter} & \textbf{Value} \\
\midrule
Number of layers & 6 \\
Number of attention heads & 6 \\
Embedding dimension & 384 \\
Feed-forward dimension & 1536 (4$\times$ embedding) \\
Maximum sequence length & 2048 \\
Vocabulary size & 1029 \\
Dropout rate & 0.2 \\
Bias in linear layers & False \\
Total parameters & $\sim$10M \\
\bottomrule
\end{tabular}
\end{table}

\paragraph{Tokenizer Design.}
The tokenizer maps device pins and circuit nodes to integer indices. Table~\ref{tab:tokenizer} summarizes the token categories. The vocabulary supports circuits with up to 69 NMOS/PMOS transistors, 26 NPN/PNP BJTs, 27 resistors, 29 capacitors, 23 inductors, and various other components. Control scheme tokens (\texttt{VCONT1}--\texttt{VCONT20}) enable compact representation of reconfigurable converter control schemes as described in Appendix~\ref{sec:graph_modeling}. The \texttt{TRUNCATE} token serves as padding for fixed-length sequence training.

\begin{table}[H]
\centering
\caption{Tokenizer vocabulary composition.}
\label{tab:tokenizer}
\begin{tabular}{lcc}
\toprule
\textbf{Token Category} & \textbf{Count} & \textbf{Pins per Device} \\
\midrule
NMOS transistors (NM1--NM69) & 276 & 4 (D, G, S, B) \\
PMOS transistors (PM1--PM69) & 276 & 4 (D, G, S, B) \\
NPN transistors (NPN1--NPN26) & 78 & 3 (C, B, E) \\
PNP transistors (PNP1--PNP26) & 78 & 3 (C, B, E) \\
Resistors (R1--R27) & 54 & 2 (P, N) \\
Capacitors (C1--C29) & 58 & 2 (P, N) \\
Inductors (L1--L23) & 46 & 2 (P, N) \\
Diodes (DIO1--DIO7) & 14 & 2 (P, N) \\
Control schemes (VCONT1--VCONT20) & 20 & -- \\
I/O nodes (VIN, VOUT, IIN, etc.) & 80+ & -- \\
Special tokens (VDD, VSS, TRUNCATE) & 3 & -- \\
\midrule
\textbf{Total vocabulary size} & \textbf{1029} & \\
\bottomrule
\end{tabular}
\end{table}

\paragraph{Pretraining Configuration.}
Table~\ref{tab:pretrain_config} details the pretraining hyperparameters. We use AdamW optimizer~\citep{loshchilov2017decoupled} with cosine learning rate decay and linear warmup. Mixed-precision training (bfloat16) and \texttt{torch.compile} are enabled for training efficiency. The model is pretrained on the full circuit dataset with 9:1 train/validation split. All experiments use a single NVIDIA L40S GPU.

\begin{table}[H]
\centering
\caption{Pretraining hyperparameters.}
\label{tab:pretrain_config}
\begin{tabular}{lc}
\toprule
\textbf{Hyperparameter} & \textbf{Value} \\
\midrule
Batch size & 128 \\
Maximum iterations & 100,000 \\
Learning rate (peak) & $6 \times 10^{-4}$ \\
Learning rate (minimum) & $6 \times 10^{-5}$ \\
Warmup iterations & 1,000 \\
LR decay iterations & 80,000 \\
Weight decay & $1 \times 10^{-2}$ \\
Adam $\beta_1$, $\beta_2$ & 0.9, 0.999 \\
Gradient clipping & 1.0 \\
Precision & bfloat16 \\
Torch compile & Enabled \\
\bottomrule
\end{tabular}
\end{table}

\subsection{Evolutionary Finetuning Configuration}
\label{sec:evo_config}

Evolutionary finetuning co-evolves the generative model with its training distribution through iterative selection, finetuning, and generation phases. This section details the hyperparameters governing each phase.

\paragraph{Evolutionary Loop Parameters.} Table~\ref{tab:evo_params} summarizes the core evolutionary algorithm configuration. The framework runs for 500 generations, generating 256 new candidate circuits per generation.

\begin{table}[H]
\centering
\caption{Evolutionary loop hyperparameters.}
\label{tab:evo_params}
\begin{tabular}{lc}
\toprule
\textbf{Hyperparameter} & \textbf{Value} \\
\midrule
Number of generations & 500 \\
Candidates generated per generation & 256 \\
Selection ratio & 0.7 \\
Elite ratio $\alpha$ & 0.15 \\
Tournament size $k$ & 3 \\
\bottomrule
\end{tabular}
\end{table}

\paragraph{Selection Strategy.} The selection phase operates at the circuit level rather than the sequence level, ensuring that selection pressure acts on unique topologies rather than their augmented representations. Given a population of circuits, selection proceeds as follows:

\begin{enumerate}
    \item \textbf{Circuit Grouping:} Sequences are grouped by circuit ID, with fitness computed per circuit (all augmented sequences of the same circuit share identical fitness).
    \item \textbf{Elite Selection:} The top $\alpha = 15\%$ of circuits by fitness are directly preserved.
    \item \textbf{Tournament Selection:} The remaining selection slots are filled through $k$-way tournaments ($k=3$): for each selection slot, $k$ circuits are randomly sampled and the highest-fitness circuit wins.
\end{enumerate}

\paragraph{Fitness-Weighted Batch Sampling.} During finetuning, batches are sampled with probability proportional to shifted fitness raised to a power:
\begin{equation}
    P(\text{sample } i) \propto \left( f_i - f_{\min} + \epsilon \right)^\beta,
\end{equation}
where $f_i$ is the fitness of example $i$, $f_{\min}$ is the minimum fitness in the population, $\epsilon = 10^{-6}$ prevents division by zero, and $\beta = 2.0$ controls the degree of fitness weighting.

\paragraph{Finetuning Hyperparameters.} Table~\ref{tab:finetune_params} details the optimizer configuration for finetuning.

\begin{table}[H]
\centering
\caption{Finetuning hyperparameters.}
\label{tab:finetune_params}
\begin{tabular}{lc}
\toprule
\textbf{Hyperparameter} & \textbf{Value} \\
\midrule
Batch size & 256 \\
Learning rate & $6 \times 10^{-5}$ \\
Weight decay & $5 \times 10^{-3}$ \\
Adam $\beta_1, \beta_2$ & 0.9, 0.999 \\
Gradient clipping & 0.5 \\
Dropout rate & 0.12 \\
Fitness weight power $\beta$ & 2.0 \\
\bottomrule
\end{tabular}
\end{table}

\paragraph{Generation Configuration.} Table~\ref{tab:gen_params} specifies the sampling parameters for candidate circuit generation. We use temperature-scaled sampling with $T = 0.7$ to balance diversity and quality.

\begin{table}[H]
\centering
\caption{Generation sampling parameters.}
\label{tab:gen_params}
\begin{tabular}{lc}
\toprule
\textbf{Hyperparameter} & \textbf{Value} \\
\midrule
Generation temperature & 0.7 \\
Maximum sequence length & 2,048 \\
Batch size per generation & 256 \\
\bottomrule
\end{tabular}
\end{table}

\paragraph{Data Augmentation.} Each unique circuit topology admits multiple valid Eulerian circuit representations. For each circuit, we generate up to 50 augmented sequences to provide diverse training signals.

\paragraph{Uniqueness Verification.} Generated circuits undergo graph isomorphism testing against all historically validated circuits across all previous generations. Only globally unique circuits are added to the population, preventing rediscovery of previously found topologies and ensuring continuous expansion with genuinely novel designs.

\subsection{Analytical Framework Additional Details}
\label{sec:analytical_impl}
This section details the mathematical derivation of the transformed fundamental cutset matrix $\mathbf{Q}$ and the explicit matrix equations used to extract switch parameters for the FSL metric.
\subsubsection{Construction and Transformation of the Cutset Matrix Q}
\label{sec:Q_construction}
This subsection provides the mathematical derivation of the transformed fundamental cutset matrix $\mathbf{Q}$ (referenced in Figure~\ref{fig: analytical} and Definition~\ref{def:cutsetmatrix}). While standard graph theory defines cutsets for a static graph, PowerGenie constructs a composite matrix to enforce Kirchhoff's Current Law (KCL) across multiple phases under the constraint of periodic steady-state charge conservation.

\paragraph{Variable Definition.}
We define the system charge vector $\mathbf{q}$ to represent the net charge transfer across cutsets over a full switching period $T$. For a two-phase converter, this vector is constructed by transforming the raw phase-specific charge flows. Let $\mathbf{q}_{raw}$ be the concatenation of charge flows in Phase 1 and Phase 2:
\begin{equation}
    \mathbf{q}_{raw} = [\Delta \mathbf{q}_{in}^{\phi 1}, \Delta \mathbf{q}_{c}^{\phi 1}, \Delta \mathbf{q}_{out}^{\phi 1} \mid \Delta \mathbf{q}_{in}^{\phi 2}, \Delta \mathbf{q}_{c}^{\phi 2}, \Delta \mathbf{q}_{out}^{\phi 2}]^\top
\end{equation}
where $\Delta \mathbf{q}_{c}^{\phi k}$ represents the vector of charges flowing into the capacitors during phase $k$.

\paragraph{Construction Steps.}
The construction of the matrix $\mathbf{Q}$ proceeds in three steps, following the methodology established in:

\begin{enumerate}
    \item \textbf{Phase-Specific KCL:} We first generate the standard fundamental cutset matrix for each phase topology independently, denoted as $\mathbf{Q}_{\phi 1}$ and $\mathbf{Q}_{\phi 2}$. These enforce instantaneous KCL in each phase network:
    \begin{equation}
        \mathbf{Q}_{\phi 1} \mathbf{q}^{\phi 1} = \mathbf{0}, \quad \mathbf{Q}_{\phi 2} \mathbf{q}^{\phi 2} = \mathbf{0}
    \end{equation}

    \item \textbf{Steady-State Constraints:} Under Assumption~\ref{ass:noload} (No-Load Steady State), the net charge accumulation on any flying capacitor over one period must be zero. This implies:
    \begin{equation}
        \Delta \mathbf{q}_{c, i}^{\phi 1} + \Delta \mathbf{q}_{c, i}^{\phi 2} = 0 \implies \Delta \mathbf{q}_{c, i}^{\phi 2} = -\Delta \mathbf{q}_{c, i}^{\phi 1}
    \end{equation}
    This constraint reduces the dimensionality of the problem. We designate $\Delta \mathbf{q}_{c}^{\phi 1}$ as the independent capacitor charge variable $\Delta \mathbf{q}_{c}$.

    \item \textbf{Matrix Stacking and Transformation:} We assemble the global system by stacking the phase matrices and applying a transformation matrix $\mathbf{T}_Q$ that maps the raw phase variables to the minimal set of independent variables $\mathbf{q}_{sys} = [\Delta \mathbf{q}_{in}^{\phi 1}, \Delta \mathbf{q}_{in}^{\phi 2}, \Delta \mathbf{q}_{c}, \Delta \mathbf{q}_{out}]$. 
\end{enumerate}

The transformed cutset matrix $\mathbf{Q}$ shown in Figure~\ref{fig: analytical} is derived as:
\begin{equation}
    \mathbf{Q} \cdot \mathbf{q}_{sys} = 
    \begin{bmatrix} 
        \mathbf{Q}_{\phi 1} & \mathbf{0} \\ 
        \mathbf{0} & \mathbf{Q}_{\phi 2} 
    \end{bmatrix} 
    \mathbf{T}_Q^{-1} \mathbf{q}_{sys} = \mathbf{0}
\end{equation}
where $\mathbf{T}_Q$ creates the linear combination of phase flows (e.g., mapping $\Delta \mathbf{q}_{c}^{\phi 2} \to -\Delta \mathbf{q}_{c}$). The resulting matrix $\mathbf{Q}$ is partitioned as $\mathbf{Q} = [\mathbf{Q}_c \mid \mathbf{q}_{out}]$, allowing us to solve for the charge multipliers $\mathbf{a}$ via $\mathbf{Q}_c \mathbf{a} = -\mathbf{q}_{out}$ as described in Section~\ref{sec: Approach}.

\subsubsection{Extraction of FSL Parameters}
\label{sec:fsl_extraction}

To compute the FSL metric (Theorem~\ref{thm:fsl}), PowerGenie automatically extracts the switch charge multipliers $\mathbf{a}_r$ and blocking voltages $\mathbf{v}_r$ for each phase $\phi \in \{1, 2\}$ using the phase-specific fundamental matrices.

\paragraph{Switch Blocking Voltages ($\mathbf{v}_r$).}
For a switch $i$ that is non-conducting (OFF) in phase $\phi$, its blocking voltage is determined by the KVL constraints of the phase network $G_{\phi}$. We treat the open switch as a \emph{link} that, when added to the spanning tree of capacitors and voltage sources, forms a fundamental loop. The fundamental loop matrix $\mathbf{B}^{\phi}$ encodes these dependencies.

By partitioning the rows of $\mathbf{B}^{\phi}$ to isolate the loops corresponding to the OFF-switches, and partitioning the columns into capacitor/input branches versus switch branches, the KVL equation becomes:
\begin{equation}
    \mathbf{v}_{sw}^{\phi} + \mathbf{B}_{c}^{\phi} \mathbf{v}_{c} + \mathbf{b}_{in}^{\phi} V_{IN} = \mathbf{0}
\end{equation}
where $\mathbf{v}_{sw}^{\phi}$ is the vector of blocking voltages for switches OFF in phase $\phi$. Solving explicitly yields:
\begin{equation}
    \mathbf{v}_{sw}^{\phi} = - (\mathbf{B}_{c}^{\phi} \mathbf{v}_{c} + \mathbf{b}_{in}^{\phi} V_{IN})
\end{equation}
Here, $\mathbf{v}_c$ are the steady-state capacitor voltages derived in Theorem~\ref{thm:proper}.

\paragraph{Switch Charge Multipliers ($\mathbf{a}_r$).}
For a switch $i$ that is conducting (ON) in phase $\phi$, its charge flow is determined by the KCL constraints. The conducting switch acts as a \emph{twig} (tree branch) in the spanning tree of $G_{\phi}$. Its fundamental cutset equation relates the switch current to the currents of the links (capacitors and output load) crossing that cutset.

Using the phase-specific fundamental cutset matrix $\mathbf{Q}^{\phi}$, we partition the rows to isolate the cutsets defined by the ON-switches:
\begin{equation}
    \mathbf{q}_{sw}^{\phi} + \mathbf{Q}_{c}^{\phi} \mathbf{q}_{c}^{\phi} + \mathbf{q}_{out}^{\phi} q_{out} = \mathbf{0}
\end{equation}
where $\mathbf{q}_{sw}^{\phi}$ represents the charge flows through switches ON in phase $\phi$. Solving explicitly yields:
\begin{equation}
    \mathbf{q}_{sw}^{\phi} = - (\mathbf{Q}_{c}^{\phi} \mathbf{q}_{c}^{\phi} + \mathbf{q}_{out}^{\phi} q_{out})
\end{equation}
The charge multipliers are then obtained by normalizing: $\mathbf{a}_{r}^{\phi} = \mathbf{q}_{sw}^{\phi} / q_{out}$. Note that $\mathbf{q}_{c}^{\phi}$ (capacitor charge flows in phase $\phi$) are known from the SSL solution (Theorem~\ref{thm:ssl}).

\section{Baseline Implementation Details}
\label{sec:appendix_base_implementation}

This section provides implementation details for the baseline methods compared in our experiments.

\subsection{Supervised Fine-Tuning (SFT)}
\label{sec:SFT}

Supervised fine-tuning serves as a baseline that trains directly on the labeled 8-mode converter dataset. The model is initialized from the pretrained checkpoint and fine-tuned using standard next-token prediction loss~\citep{radford2018improving} on the combined training and validation data.

\paragraph{Training Configuration.} Table~\ref{tab:sft_params} summarizes the SFT hyperparameters. We use a lower learning rate than pretraining to preserve learned circuit syntax while adapting to the target distribution.

\begin{table}[H]
\centering
\caption{SFT baseline hyperparameters.}
\label{tab:sft_params}
\begin{tabular}{lc}
\toprule
\textbf{Hyperparameter} & \textbf{Value} \\
\midrule
Maximum iterations & 2,000 \\
Batch size & 256 \\
Learning rate & $6 \times 10^{-5}$ \\
Weight decay & $5 \times 10^{-3}$ \\
Adam $\beta_1, \beta_2$ & 0.9, 0.999 \\
Gradient clipping & 0.5 \\
Dropout rate & 0.12 \\
\bottomrule
\end{tabular}
\end{table}

\subsection{Direct Preference Optimization (DPO)}
\label{sec:DPO}

Direct Preference Optimization~\citep{rafailov2023direct} aligns the model by directly optimizing on preference pairs without requiring a separate reward model. We initialize from the SFT or pretrain checkpoint and construct preference pairs by splitting the labeled dataset based on median FoM value.

\paragraph{Preference Pair Construction.} We partition the combined training and validation data into chosen and rejected sets based on the median label:
\begin{itemize}
    \item \textbf{Chosen:} Sequences with FoM $\geq$ median (higher-performing circuits)
    \item \textbf{Rejected:} Sequences with FoM $<$ median (lower-performing circuits)
\end{itemize}
This yields balanced preference pairs for training.

\paragraph{DPO Loss.} We define the implicit reward margin $u(x, y_w, y_l)$ as the difference in log-likelihood ratios between the winning and losing sequences:
\begin{equation}
    u(x, y_w, y_l) = \beta \log\frac{\pi_\theta(y_w|x)}{\pi_{\text{ref}}(y_w|x)} - \beta \log\frac{\pi_\theta(y_l|x)}{\pi_{\text{ref}}(y_l|x)}.
\end{equation}
The DPO objective with label smoothing is then given by:
\begin{equation}
    \mathcal{L}_{\text{DPO}} = -\mathbb{E}\left[(1-\epsilon)\log\sigma\left(u(x, y_w, y_l)\right) + \epsilon \log\sigma\left(-u(x, y_w, y_l)\right)\right],
\end{equation}
where $\pi_{\text{ref}}$ is the frozen SFT or pretrain model, $\beta$ controls deviation from the reference, and $\epsilon$ is the label smoothing factor.

\paragraph{Training Configuration.} Table~\ref{tab:dpo_params} summarizes the DPO hyperparameters. We use a significantly lower learning rate than SFT or pretrain to ensure stable preference learning without diverging from the reference policy.

\begin{table}[H]
\centering
\caption{DPO baseline hyperparameters.}
\label{tab:dpo_params}
\begin{tabular}{lc}
\toprule
\textbf{Hyperparameter} & \textbf{Value} \\
\midrule
Maximum iterations & 500 \\
Batch size & 256 \\
Learning rate & $1 \times 10^{-6}$ \\
Weight decay & $5 \times 10^{-3}$ \\
Adam $\beta_1, \beta_2$ & 0.9, 0.999 \\
Gradient clipping & 0.5 \\
Dropout rate & 0.12 \\
DPO $\beta$ (temperature) & 0.005 \\
Label smoothing $\epsilon$ & 0.1 \\
\bottomrule
\end{tabular}
\end{table}

\paragraph{Reference Model.} The reference model $\pi_{\text{ref}}$ is a frozen copy of the SFT or pretrain checkpoint, ensuring the policy does not diverge excessively during preference optimization.

\subsection{Proximal Policy Optimization (PPO)}
\label{sec:PPO}

Proximal Policy Optimization~\citep{schulman2017proximal} aligns the model through online reinforcement learning with a clipped surrogate objective. We initialize from the SFT or pretrain checkpoint and add a value head for advantage estimation.

\paragraph{Policy Architecture.} The policy network extends the SFT or pretrain model with a linear value head that predicts state values from the final hidden states. The value head is initialized with small weights (std $= 10^{-3}$) and zero bias to ensure stable initial value estimates.

\paragraph{Reference Model.} A frozen copy of the SFT checkpoint serves as the reference policy $\pi_{\text{ref}}$. We penalize divergence from this reference using the KL divergence:
\begin{equation}
    D_{\text{KL}}(\pi_\theta \| \pi_{\text{ref}}) = \sum_a \pi_\theta(a|s) \left(\log \pi_\theta(a|s) - \log \pi_{\text{ref}}(a|s)\right),
\end{equation}

\paragraph{Adaptive KL Coefficient.} The KL penalty coefficient $\beta$ adapts via smooth proportional control to maintain a target divergence:
\begin{equation}
    \beta_{t+1} = \beta_t \cdot \exp\left(\alpha \cdot (D_{\text{KL}} - D_{\text{target}})\right),
\end{equation}
where $\alpha$ controls adaptation speed. This increases $\beta$ when KL exceeds the target and decreases it otherwise, bounded by $[\beta_{\min}, \beta_{\max}]$.

\paragraph{Reward Structure.} Circuits receive shaped rewards based on a seven-stage validation cascade, providing dense feedback for policy learning.

\textbf{Stage 1: Syntax Validation (Tests 1--5).}
\begin{itemize}
    \item \textit{Test 1:} Floating pin detection---ensures each device pin connects to pins from other devices or ports
    \item \textit{Test 2:} Device completeness---verifies all required pins are present for each device instance
    \item \textit{Test 3:} Port connectivity---checks that non-control pins connect to at most one port
    \item \textit{Test 4:} Netlist parsing---validates that the sequence can be assembled into a well-formed SPICE netlist
    \item \textit{Test 5:} Graph conversion---constructs the circuit connectivity graph for analytical evaluation
\end{itemize}
Circuits failing any of tests 1--5 receive ladder rewards $r \in \{-3.5, -3.2, -2.9, -2.6, -2.3\}$ based on the number of first five tests passed. All 5 tests passed circuits will proceed to stage 2. 

\textbf{Stage 2: Functional Validation (Test 6).}
\begin{itemize}
    \item Properly posed criterion (Theorem~\ref{thm:proper})
    \item Correct voltage conversion ratio
\end{itemize}
Circuits with fewer than 8 valid modes receive ladder rewards: $r \in \{-2.0, -1.9, -1.8, -1.7, -1.6, -1.5, -1.4, -1.3\}$ for 0--7 valid modes respectively. Circuits achieving all 8 valid modes receive their FoM value with range of $[-1.0, 1.0]$.

\textbf{Stage 3: Novelty Verification (Test 7).} Graph isomorphism testing against the training database and previously validated circuits. Unique 8-mode circuits receive an additional bonus of $+0.01$.

\textbf{Completion Bonus.} All circuits sequence that terminate properly (emit the end-of-sequence TRUNCATE token) receive an additional bonus of $+0.01$, encouraging the policy to generate complete sequences.

\paragraph{Advantage Estimation.} We use Generalized Advantage Estimation (GAE) with $\gamma = 1.0$ and $\lambda = 0.95$:
\begin{equation}
    \hat{A}_t = \sum_{l=0}^{\infty} (\gamma \lambda)^l \delta_{t+l}, \quad \delta_t = r_t + \gamma V(s_{t+1}) - V(s_t),
\end{equation}
where rewards are assigned only at sequence termination. Advantages are normalized to zero mean and unit variance within each batch.

\paragraph{Training Configuration.} Table~\ref{tab:ppo_params} summarizes the PPO hyperparameters.
\begin{table}[H]
\centering
\caption{PPO baseline hyperparameters.}
\label{tab:ppo_params}
\begin{tabular}{lc}
\toprule
\textbf{Hyperparameter} & \textbf{Value} \\
\midrule
Training steps & 500 \\
Episodes per batch & 256 \\
Micro-batch size & 64 \\
PPO epochs per batch & 4 \\
Learning rate & $1 \times 10^{-6}$ \\
Gradient clipping & 0.5 \\
Weight decay & $5 \times 10^{-3}$ \\
Adam $\beta_1, \beta_2$ & 0.9, 0.999 \\
\midrule
\multicolumn{2}{c}{\textit{PPO-specific}} \\
\midrule
Clip $\epsilon$ & 0.2 \\
Value coefficient & 0.1 \\
Discount $\gamma$ & 1.0 \\
GAE $\lambda$ & 0.95 \\
Generation temperature & 0.7 \\
\midrule
\multicolumn{2}{c}{\textit{Adaptive KL}} \\
\midrule
Initial $\beta$ & 0.01 \\
Target KL & 0.1 \\
Adaptation rate $\alpha$ & 0.1 \\
$\beta$ bounds & $[10^{-4}, 10]$ \\
\bottomrule
\end{tabular}
\end{table}

\section{SPICE Simulation Details}
\label{sec:appendix_spice}

This section describes the SPICE simulation methodology used to validate the analytical predictions and compare discovered topologies against training set circuits.

\subsection{Sizing Optimization Methodology}
\label{sec:sizing_opt}

Validating topology performance requires sizing optimization to determine optimal component values, followed by transient simulation to measure actual efficiency. This workflow is applied identically to both discovered and baseline topologies

\paragraph{Mode-Independent Optimization with Reconciliation.} 

\begin{enumerate}
    \item \textbf{Per-mode optimization:} For each of the 8 VCR modes independently, we optimize switch sizes to maximize a weighted objective combining peak efficiency and output power. Since only a subset of switches conducts in any given mode, optimization is restricted to active switches, reducing problem dimensionality and accelerating convergence.
    
    \item \textbf{Cross-mode reconciliation:} After obtaining optimal sizes for each mode, we reconcile by selecting, for each switch, the maximum size across all modes:
    \begin{equation}
        W_i^* = \max_{m \in \{1,\ldots,8\}} W_i^{(m)},
    \end{equation}
    where $W_i^{(m)}$ is the optimal width for switch $i$ in mode $m$. This ensures each switch is sized to handle its worst-case current requirement across all operating modes.
\end{enumerate}

\paragraph{Optimization Algorithm.} Optimization is performed using the built-in \texttt{mvarsearch} function in Spectre MDL. The objective function is a weighted sum of peak efficiency and output power, balancing conversion efficiency against power delivery capability.

\subsection{Simulation Setup and Parameters}
\label{sec:sim_setup}

To evaluate converter performance, we simulate the generated circuit netlists using Cadence Spectre X with the TSMC 180nm process design kit.

\paragraph{Operating Conditions.} The converter is driven by a 1~V input supply. For each VCR mode, the output voltage $V_{\text{out}}$ is swept from 0 to $1 \times \text{VCR}$ to characterize efficiency across the full operating range. This sweep captures both light-load and heavy-load behavior, enabling comprehensive performance comparison.

\paragraph{Switch Driving.} NMOS switches are driven by ideal pulse voltage sources with a 0--2~V swing at 10~MHz switching frequency. A 48\% duty cycle is used for each phase to prevent shoot-through from overlapping conduction periods, ensuring a small dead-time margin between phase transitions.

\paragraph{Measurements.} For each operating point in the $V_{\text{out}}$ sweep, we measure:
\begin{itemize}
    \item \textbf{Output power:} $P_{\text{out}} = V_{\text{out}} \times I_{\text{out}}$
    \item \textbf{Conversion efficiency:} $\eta = P_{\text{out}} / P_{\text{in}}$
\end{itemize}
Peak efficiency and the corresponding output power are extracted for each VCR mode to enable cross-topology comparison.

\paragraph{Simulation Parameters.} Table~\ref{tab:sim_params} summarizes the simulation configuration.

\begin{table}[H]
\centering
\caption{SPICE simulation parameters.}
\label{tab:sim_params}
\begin{tabular}{lc}
\toprule
\textbf{Parameter} & \textbf{Value} \\
\midrule
Simulator & Cadence Spectre X \\
Process technology & TSMC 180nm \\
Input voltage $V_{\text{in}}$ & 1 V \\
Switching frequency $f_{\text{sw}}$ & 10 MHz \\
Phase duty cycle & 48\% \\
Gate drive swing & 0--2 V \\
Total flying capacitance $C_{\text{total}}$ & 3 nF \\
Parasitic capacitance ratio & 2\% \\
\bottomrule
\end{tabular}
\end{table}

\subsection{Analytical Framework vs. SPICE Computational Cost Comparison}
\label{sec:time_complexity}

A key advantage of the analytical framework is its computational efficiency compared to SPICE-based optimization, enabling large-scale evolutionary search that would otherwise be intractable.

\paragraph{Per-Topology Evaluation Time.} Table~\ref{tab:time_comparison} compares the evaluation time for a single 8-mode reconfigurable converter topology.

\begin{table}[H]
\centering
\caption{Per-topology evaluation time comparison.}
\label{tab:time_comparison}
\begin{tabular}{lcc}
\toprule
\textbf{Method} & \textbf{Time per Topology} & \textbf{Speedup} \\
\midrule
Analytical framework & 0.07 s & $125{,}000\times$ \\
SPICE optimization & 8,741 s ($\approx$2.4 h) & $1\times$ \\
\bottomrule
\end{tabular}
\end{table}

The analytical framework evaluates a topology in 0.07 seconds by directly computing SSL and FSL metrics through matrix operations (Theorems~\ref{thm:ssl}--\ref{thm:fsl}). In contrast, SPICE-based evaluation requires sizing optimization across all 8 modes, with each mode requiring approximately $500$ Spectre simulations for gradient-based search convergence. At 2.19 seconds per simulation, the total SPICE evaluation time is:
\begin{equation}
    T_{\text{SPICE}} = 8 \times 500 \times 2.19 \text{ s} = 8{,}741 \text{ s} \approx 2.4 \text{ hours}.
\end{equation}

\paragraph{Evolutionary Finetuning Scalability.} The computational advantage becomes decisive at the scale required for evolutionary finetuning. With 500 generations and 256 candidate topologies per generation, the total evaluation burden is:
\begin{equation}
    N_{\text{total}} = 500 \times 256 = 128{,}000 \text{ topologies}.
\end{equation}

Table~\ref{tab:evo_time} compares the total power converter evaluation time for the complete evolutionary finetuning process.

\begin{table}[H]
\centering
\caption{Total evaluation time for evolutionary finetuning (power converter functionality and performance evaluation only).}
\label{tab:evo_time}
\begin{tabular}{lc}
\toprule
\textbf{Method} & \textbf{Total Time} \\
\midrule
Analytical framework & 2.5 hours \\
SPICE optimization & 35.5 years \\
\bottomrule
\end{tabular}
\end{table}

The analytical framework completes all 128,000 evaluations in approximately 2.5 hours, making evolutionary finetuning practical on a single GPU. In contrast, SPICE-based evaluation would require over 35 years of computation time. This five-orders-of-magnitude speedup is what enables PowerGenie to explore the vast topology design space and discover circuits that surpass the training distribution.

\section{Best Discovered Topology Additional Details}
\label{sec:appendix_discovered}

This section provides the complete control scheme for the best 8-mode reconfigurable power converter discovered by PowerGenie.

\subsection{Control Scheme}
\label{sec:control_scheme}

Table~\ref{tab:8mode_control_powergenie} presents the complete control scheme for the discovered 47-switch, 10-capacitor, 8-mode reconfigurable power converter. Each entry specifies whether a switch is on (1) or off (0) during each clock phase ($\Phi_1$ or $\Phi_2$) for each voltage conversion ratio (VCR) mode.

\paragraph{Control Complexity.} Each switch requires a 16-bit control scheme specifying its on/off state across 8 modes $\times$ 2 phases, yielding $2^{16} = 65,536$ possible configurations per switch. With 47 switches, the full converter control space spans $2^{16 \times 47} = 2^{752}$ possible configurations—a search space far beyond manual exploration. PowerGenie's evolutionary finetuning automatically discovers both the topology and its corresponding control scheme within this vast design space.

\begin{table}[H]
\centering
\caption{Best discovered 8-mode power converter control scheme.}
\label{tab:8mode_control_powergenie}
\adjustbox{max width=\textwidth, max height=0.45\textheight}{
\begin{tabular}{|c|c|c|c|c|c|c|c|c|c|c|c|c|c|c|c|c|}
\hline
 & \multicolumn{2}{c|}{VCR=1} & \multicolumn{2}{c|}{VCR=4/5} & \multicolumn{2}{c|}{VCR=3/4} & \multicolumn{2}{c|}{VCR=2/3} & \multicolumn{2}{c|}{VCR=1/2} & \multicolumn{2}{c|}{VCR=1/3} & \multicolumn{2}{c|}{VCR=1/4} & \multicolumn{2}{c|}{VCR=1/5} \\
\hline
 & $\Phi$2 & $\Phi$1 & $\Phi$2 & $\Phi$1 & $\Phi$2 & $\Phi$1 & $\Phi$2 & $\Phi$1 & $\Phi$2 & $\Phi$1 & $\Phi$2 & $\Phi$1 & $\Phi$2 & $\Phi$1 & $\Phi$2 & $\Phi$1 \\
\hline
M1 & 0 & 0 & 0 & 1 & 0 & 1 & 0 & 1 & 0 & 0 & 0 & 1 & 1 & 0 & 0 & 1 \\
\hline
M2 & 0 & 0 & 1 & 0 & 1 & 0 & 1 & 0 & 0 & 0 & 1 & 0 & 0 & 1 & 1 & 0 \\
\hline
M3 & 0 & 0 & 0 & 1 & 0 & 1 & 0 & 1 & 0 & 0 & 0 & 1 & 1 & 0 & 0 & 1 \\
\hline
M4 & 0 & 0 & 1 & 0 & 1 & 0 & 1 & 0 & 0 & 0 & 1 & 0 & 0 & 1 & 1 & 0 \\
\hline
M5 & 0 & 0 & 0 & 1 & 0 & 1 & 0 & 1 & 0 & 0 & 0 & 1 & 1 & 0 & 0 & 1 \\
\hline
M6 & 0 & 0 & 0 & 0 & 0 & 0 & 0 & 0 & 0 & 0 & 0 & 0 & 0 & 0 & 0 & 1 \\
\hline
M7 & 0 & 0 & 0 & 0 & 0 & 0 & 0 & 0 & 0 & 0 & 0 & 0 & 0 & 0 & 1 & 0 \\
\hline
M8 & 0 & 0 & 0 & 0 & 0 & 0 & 0 & 0 & 0 & 0 & 0 & 0 & 0 & 0 & 0 & 1 \\
\hline
M9 & 0 & 0 & 0 & 0 & 0 & 0 & 0 & 0 & 0 & 0 & 0 & 0 & 0 & 0 & 1 & 0 \\
\hline
M10 & 0 & 0 & 0 & 0 & 0 & 0 & 0 & 0 & 0 & 0 & 0 & 0 & 0 & 1 & 0 & 0 \\
\hline
M11 & 0 & 0 & 0 & 0 & 0 & 0 & 0 & 0 & 0 & 0 & 0 & 0 & 1 & 0 & 0 & 0 \\
\hline
M12 & 0 & 0 & 0 & 0 & 0 & 0 & 0 & 0 & 0 & 0 & 0 & 0 & 0 & 1 & 0 & 0 \\
\hline
M13 & 0 & 0 & 0 & 0 & 0 & 0 & 0 & 0 & 0 & 0 & 1 & 0 & 0 & 0 & 0 & 0 \\
\hline
M14 & 0 & 0 & 0 & 0 & 0 & 0 & 0 & 0 & 0 & 0 & 0 & 1 & 0 & 0 & 0 & 0 \\
\hline
M15 & 0 & 0 & 0 & 0 & 0 & 0 & 0 & 1 & 0 & 0 & 0 & 0 & 0 & 0 & 0 & 0 \\
\hline
M16 & 0 & 0 & 0 & 0 & 0 & 0 & 1 & 0 & 0 & 0 & 0 & 0 & 0 & 0 & 0 & 0 \\
\hline
M17 & 0 & 0 & 0 & 0 & 0 & 1 & 0 & 0 & 0 & 0 & 0 & 0 & 0 & 0 & 0 & 0 \\
\hline
M18 & 0 & 0 & 0 & 0 & 1 & 0 & 0 & 0 & 0 & 0 & 0 & 0 & 0 & 0 & 0 & 0 \\
\hline
M19 & 0 & 0 & 0 & 0 & 0 & 1 & 0 & 0 & 0 & 0 & 0 & 0 & 0 & 0 & 0 & 0 \\
\hline
M20 & 0 & 0 & 0 & 0 & 1 & 0 & 0 & 0 & 0 & 0 & 0 & 0 & 0 & 0 & 0 & 0 \\
\hline
M21 & 0 & 0 & 0 & 0 & 0 & 1 & 0 & 0 & 0 & 0 & 0 & 0 & 0 & 0 & 0 & 0 \\
\hline
M22 & 0 & 0 & 0 & 1 & 0 & 0 & 0 & 0 & 0 & 0 & 0 & 0 & 0 & 0 & 0 & 0 \\
\hline
M23 & 0 & 0 & 0 & 1 & 0 & 0 & 0 & 0 & 0 & 0 & 0 & 0 & 0 & 0 & 0 & 0 \\
\hline
M24 & 0 & 0 & 1 & 0 & 0 & 0 & 0 & 0 & 0 & 0 & 0 & 0 & 0 & 0 & 0 & 0 \\
\hline
M25 & 0 & 0 & 0 & 1 & 0 & 0 & 0 & 0 & 0 & 0 & 0 & 0 & 0 & 0 & 0 & 0 \\
\hline
M26 & 0 & 0 & 1 & 0 & 0 & 0 & 0 & 0 & 0 & 0 & 0 & 0 & 0 & 0 & 0 & 0 \\
\hline
M27 & 0 & 0 & 0 & 1 & 0 & 0 & 0 & 0 & 0 & 0 & 0 & 0 & 0 & 0 & 0 & 0 \\
\hline
M28 & 0 & 0 & 1 & 0 & 0 & 0 & 0 & 0 & 0 & 0 & 0 & 0 & 0 & 0 & 0 & 0 \\
\hline
M29 & 0 & 0 & 0 & 1 & 0 & 0 & 0 & 0 & 0 & 0 & 0 & 0 & 0 & 0 & 0 & 0 \\
\hline
M30 & 0 & 0 & 0 & 0 & 0 & 0 & 0 & 0 & 0 & 0 & 1 & 1 & 1 & 1 & 1 & 1 \\
\hline
M31 & 0 & 0 & 0 & 0 & 0 & 0 & 0 & 0 & 0 & 0 & 0 & 0 & 0 & 0 & 1 & 1 \\
\hline
M32 & 0 & 0 & 0 & 0 & 0 & 0 & 0 & 0 & 0 & 0 & 0 & 0 & 0 & 0 & 1 & 1 \\
\hline
M33 & 0 & 0 & 0 & 0 & 0 & 0 & 0 & 0 & 0 & 0 & 0 & 0 & 0 & 0 & 1 & 1 \\
\hline
M34 & 0 & 0 & 0 & 0 & 0 & 0 & 0 & 0 & 0 & 0 & 0 & 0 & 0 & 0 & 1 & 1 \\
\hline
M35 & 0 & 0 & 0 & 0 & 0 & 0 & 0 & 0 & 0 & 0 & 0 & 0 & 1 & 1 & 0 & 0 \\
\hline
M36 & 0 & 0 & 0 & 0 & 0 & 0 & 0 & 0 & 0 & 0 & 0 & 0 & 1 & 1 & 0 & 0 \\
\hline
M37 & 0 & 0 & 1 & 1 & 1 & 1 & 1 & 1 & 0 & 0 & 1 & 1 & 0 & 0 & 0 & 0 \\
\hline
M38 & 0 & 0 & 0 & 0 & 0 & 0 & 0 & 0 & 0 & 0 & 1 & 1 & 0 & 0 & 0 & 0 \\
\hline
M39 & 0 & 0 & 1 & 1 & 1 & 1 & 1 & 1 & 0 & 0 & 0 & 0 & 0 & 0 & 0 & 0 \\
\hline
M40 & 0 & 0 & 0 & 0 & 0 & 0 & 1 & 1 & 0 & 0 & 0 & 0 & 0 & 0 & 0 & 0 \\
\hline
M41 & 0 & 0 & 0 & 0 & 1 & 1 & 0 & 0 & 0 & 0 & 0 & 0 & 0 & 0 & 0 & 0 \\
\hline
M42 & 0 & 0 & 1 & 1 & 0 & 0 & 0 & 0 & 0 & 0 & 0 & 0 & 0 & 0 & 0 & 0 \\
\hline
M43 & 0 & 1 & 0 & 0 & 0 & 0 & 0 & 0 & 1 & 1 & 0 & 0 & 0 & 0 & 0 & 0 \\
\hline
M44 & 0 & 1 & 0 & 0 & 0 & 0 & 0 & 0 & 0 & 1 & 0 & 0 & 0 & 0 & 0 & 0 \\
\hline
M45 & 1 & 0 & 0 & 0 & 0 & 0 & 0 & 0 & 1 & 0 & 0 & 0 & 0 & 0 & 0 & 0 \\
\hline
M46 & 0 & 0 & 0 & 0 & 0 & 0 & 0 & 0 & 0 & 1 & 0 & 0 & 0 & 0 & 0 & 0 \\
\hline
M47 & 1 & 1 & 0 & 0 & 0 & 0 & 0 & 0 & 1 & 0 & 0 & 0 & 0 & 0 & 0 & 0 \\
\hline
\end{tabular}
}
\end{table}

\subsection{Sizing Parameters}
\label{sec:sizing}

Table~\ref{tab:device_sizing} presents the device sizing parameters for the best discovered 8-mode power converter.

\begin{table}[H]
\centering
\caption{Device sizing for the best discovered 8-mode power converter topology. The circuit contains 10 capacitors (each C = 300pF, 390$\mu$m $\times$ 390$\mu$m) and 47 switches. Total area is 1.5mm$^2$, with capacitors occupying 99.88\% and switches occupying 0.12\%.}
\label{tab:device_sizing}
\adjustbox{max width=\textwidth, max height=0.15\textheight}{
\begin{tabular}{|c|c|c|c||c|c|c|c||c|c|c|c|}
\hline
Device & W & L & m & Device & W & L & m & Device & W & L & m \\
\hline
M1 & 10$\mu$ & 180n & 50 & M2 & 10$\mu$ & 180n & 33 & M3 & 10$\mu$ & 180n & 50 \\
\hline
M4 & 10$\mu$ & 180n & 32 & M5 & 10$\mu$ & 180n & 50 & M6 & 10$\mu$ & 180n & 19 \\
\hline
M7 & 10$\mu$ & 180n & 3 & M8 & 10$\mu$ & 180n & 20 & M9 & 10$\mu$ & 180n & 4 \\
\hline
M10 & 10$\mu$ & 180n & 12 & M11 & 10$\mu$ & 180n & 12 & M12 & 10$\mu$ & 180n & 26 \\
\hline
M13 & 10$\mu$ & 180n & 50 & M14 & 10$\mu$ & 180n & 1 & M15 & 10$\mu$ & 180n & 50 \\
\hline
M16 & 10$\mu$ & 180n & 12 & M17 & 10$\mu$ & 180n & 50 & M18 & 10$\mu$ & 180n & 2 \\
\hline
M19 & 10$\mu$ & 180n & 18 & M20 & 10$\mu$ & 180n & 1 & M21 & 10$\mu$ & 180n & 49 \\
\hline
M22 & 10$\mu$ & 180n & 11 & M23 & 10$\mu$ & 180n & 50 & M24 & 10$\mu$ & 180n & 2 \\
\hline
M25 & 10$\mu$ & 180n & 26 & M26 & 10$\mu$ & 180n & 9 & M27 & 10$\mu$ & 180n & 11 \\
\hline
M28 & 10$\mu$ & 180n & 10 & M29 & 10$\mu$ & 180n & 12 & M30 & 10$\mu$ & 180n & 35 \\
\hline
M31 & 10$\mu$ & 180n & 3 & M32 & 10$\mu$ & 180n & 1 & M33 & 10$\mu$ & 180n & 3 \\
\hline
M34 & 10$\mu$ & 180n & 2 & M35 & 10$\mu$ & 180n & 13 & M36 & 10$\mu$ & 180n & 13 \\
\hline
M37 & 10$\mu$ & 180n & 46 & M38 & 10$\mu$ & 180n & 50 & M39 & 10$\mu$ & 180n & 37 \\
\hline
M40 & 10$\mu$ & 180n & 2 & M41 & 10$\mu$ & 180n & 14 & M42 & 10$\mu$ & 180n & 11 \\
\hline
M43 & 10$\mu$ & 180n & 50 & M44 & 10$\mu$ & 180n & 43 & M45 & 10$\mu$ & 180n & 29 \\
\hline
M46 & 10$\mu$ & 180n & 26 & M47 & 10$\mu$ & 180n & 4 & & & &  \\
\hline
\end{tabular}}
\end{table}

\section{Best Training Topology Details}
\label{sec:appendix_training}

This section provides the complete details for the best-performing 8-mode reconfigurable power converter in the training set, which achieves FoM = 0.263 and serves as the baseline for evaluating PowerGenie's discovery capability.

\subsection{Topology}
\label{sec:topology_training}

Figure~\ref{fig:training_best_topology} shows the best training topology—a 51-switch, 11-capacitor, 8-mode reconfigurable power converter. Compared to the best discovered topology (47 switches, 10 capacitors), it uses 4 more switches and 1 more capacitor.

\begin{figure}[H]
    \centering
    \includegraphics[width=0.48\linewidth]{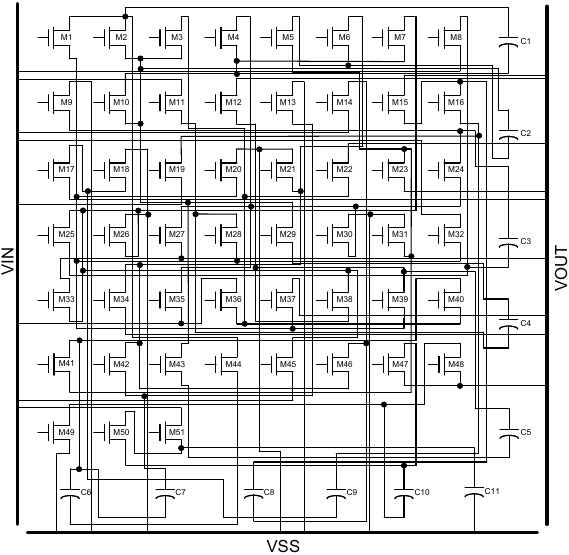}
    \caption{Best 8-mode power converter topology in the training set (FoM = 0.263). The topology contains 51 switches and 11 capacitors.}
    \label{fig:training_best_topology}
\end{figure}

\subsection{Control Scheme}
\label{sec:control_scheme_training}

Table~\ref{tab:training_best_control} presents the complete control scheme for the training-best 8-mode reconfigurable power converter. Each entry specifies whether a switch is on (1) or off (0) during each phase ($\Phi_1$ or $\Phi_2$) for each voltage conversion ratio (VCR) mode.

\begin{table}[H]
\centering
\caption{Training set best 8-mode power converter's control scheme}
\label{tab:training_best_control}
\adjustbox{max width=\textwidth, max height=0.44\textheight}{
\begin{tabular}{|c|c|c|c|c|c|c|c|c|c|c|c|c|c|c|c|c|}
\hline
 & \multicolumn{2}{c|}{VCR=1} & \multicolumn{2}{c|}{VCR=4/5} & \multicolumn{2}{c|}{VCR=3/4} & \multicolumn{2}{c|}{VCR=2/3} & \multicolumn{2}{c|}{VCR=1/2} & \multicolumn{2}{c|}{VCR=1/3} & \multicolumn{2}{c|}{VCR=1/4} & \multicolumn{2}{c|}{VCR=1/5} \\
\hline
 & $\Phi$2 & $\Phi$1 & $\Phi$2 & $\Phi$1 & $\Phi$2 & $\Phi$1 & $\Phi$2 & $\Phi$1 & $\Phi$2 & $\Phi$1 & $\Phi$2 & $\Phi$1 & $\Phi$2 & $\Phi$1 & $\Phi$2 & $\Phi$1 \\
\hline
M1 & 0 & 0 & 0 & 1 & 0 & 1 & 0 & 1 & 0 & 0 & 0 & 1 & 1 & 0 & 0 & 1 \\
\hline
M2 & 0 & 0 & 1 & 0 & 1 & 0 & 1 & 0 & 0 & 0 & 1 & 0 & 0 & 1 & 1 & 0 \\
\hline
M3 & 0 & 0 & 0 & 1 & 0 & 1 & 0 & 1 & 0 & 0 & 0 & 1 & 1 & 0 & 1 & 0 \\
\hline
M4 & 0 & 0 & 1 & 0 & 0 & 1 & 0 & 1 & 0 & 0 & 1 & 0 & 1 & 0 & 0 & 1 \\
\hline
M5 & 0 & 0 & 0 & 1 & 1 & 0 & 0 & 1 & 0 & 0 & 0 & 1 & 0 & 1 & 0 & 1 \\
\hline
M6 & 0 & 0 & 1 & 0 & 0 & 1 & 1 & 0 & 0 & 0 & 1 & 0 & 1 & 0 & 1 & 0 \\
\hline
M7 & 0 & 0 & 0 & 1 & 1 & 0 & 1 & 0 & 0 & 0 & 0 & 1 & 0 & 1 & 1 & 0 \\
\hline
M8 & 0 & 0 & 0 & 0 & 0 & 0 & 0 & 0 & 0 & 0 & 0 & 0 & 0 & 0 & 0 & 1 \\
\hline
M9 & 0 & 0 & 0 & 0 & 0 & 0 & 0 & 0 & 0 & 0 & 0 & 0 & 0 & 0 & 1 & 0 \\
\hline
M10 & 0 & 0 & 0 & 0 & 0 & 0 & 0 & 0 & 0 & 0 & 0 & 0 & 0 & 0 & 0 & 1 \\
\hline
M11 & 0 & 0 & 0 & 0 & 0 & 0 & 0 & 0 & 0 & 0 & 0 & 0 & 0 & 1 & 0 & 0 \\
\hline
M12 & 0 & 0 & 0 & 0 & 1 & 1 & 0 & 0 & 0 & 0 & 0 & 0 & 0 & 0 & 0 & 0 \\
\hline
M13 & 0 & 0 & 0 & 0 & 1 & 0 & 0 & 0 & 0 & 0 & 0 & 0 & 0 & 0 & 0 & 0 \\
\hline
M14 & 0 & 0 & 0 & 1 & 0 & 0 & 0 & 0 & 0 & 0 & 0 & 0 & 0 & 0 & 0 & 0 \\
\hline
M15 & 0 & 0 & 0 & 1 & 0 & 0 & 0 & 0 & 0 & 0 & 0 & 0 & 0 & 0 & 0 & 0 \\
\hline
M16 & 0 & 0 & 1 & 0 & 0 & 0 & 0 & 0 & 0 & 0 & 0 & 0 & 0 & 0 & 0 & 0 \\
\hline
M17 & 0 & 0 & 0 & 1 & 0 & 0 & 0 & 0 & 0 & 0 & 0 & 0 & 0 & 0 & 0 & 0 \\
\hline
M18 & 0 & 0 & 1 & 0 & 0 & 0 & 0 & 0 & 0 & 0 & 0 & 0 & 0 & 0 & 0 & 0 \\
\hline
M19 & 0 & 0 & 0 & 1 & 0 & 0 & 0 & 0 & 0 & 0 & 0 & 0 & 0 & 0 & 0 & 0 \\
\hline
M20 & 0 & 0 & 0 & 0 & 0 & 0 & 0 & 0 & 0 & 0 & 0 & 0 & 0 & 0 & 1 & 1 \\
\hline
M21 & 0 & 0 & 0 & 0 & 0 & 0 & 1 & 1 & 0 & 0 & 0 & 0 & 1 & 1 & 1 & 1 \\
\hline
M22 & 0 & 0 & 0 & 0 & 0 & 0 & 0 & 0 & 0 & 0 & 1 & 1 & 1 & 1 & 1 & 1 \\
\hline
M23 & 0 & 0 & 1 & 1 & 0 & 0 & 1 & 1 & 0 & 0 & 0 & 0 & 1 & 1 & 1 & 1 \\
\hline
M24 & 0 & 0 & 0 & 0 & 0 & 0 & 0 & 0 & 0 & 0 & 0 & 0 & 0 & 0 & 1 & 1 \\
\hline
M25 & 0 & 0 & 0 & 0 & 0 & 0 & 0 & 0 & 0 & 0 & 0 & 0 & 0 & 0 & 1 & 1 \\
\hline
M26 & 0 & 0 & 0 & 0 & 0 & 0 & 0 & 0 & 0 & 0 & 0 & 0 & 1 & 1 & 0 & 0 \\
\hline
M27 & 0 & 0 & 1 & 1 & 0 & 0 & 0 & 0 & 0 & 0 & 0 & 0 & 1 & 1 & 0 & 0 \\
\hline
M28 & 0 & 0 & 0 & 0 & 0 & 0 & 0 & 0 & 0 & 0 & 0 & 0 & 1 & 1 & 0 & 0 \\
\hline
M29 & 0 & 0 & 0 & 0 & 0 & 0 & 0 & 0 & 0 & 0 & 0 & 0 & 1 & 1 & 0 & 0 \\
\hline
M30 & 0 & 0 & 0 & 0 & 0 & 0 & 0 & 0 & 0 & 0 & 1 & 1 & 0 & 0 & 0 & 0 \\
\hline
M31 & 0 & 0 & 0 & 0 & 0 & 0 & 0 & 0 & 0 & 0 & 1 & 1 & 0 & 0 & 0 & 0 \\
\hline
M32 & 0 & 0 & 0 & 0 & 0 & 0 & 0 & 0 & 0 & 0 & 1 & 1 & 0 & 0 & 0 & 0 \\
\hline
M33 & 0 & 0 & 0 & 0 & 0 & 0 & 1 & 1 & 0 & 0 & 1 & 1 & 0 & 0 & 0 & 0 \\
\hline
M34 & 0 & 0 & 0 & 0 & 0 & 0 & 0 & 0 & 0 & 0 & 1 & 1 & 0 & 0 & 0 & 0 \\
\hline
M35 & 0 & 0 & 0 & 0 & 1 & 1 & 1 & 1 & 0 & 0 & 0 & 0 & 0 & 0 & 0 & 0 \\
\hline
M36 & 0 & 0 & 1 & 1 & 0 & 0 & 1 & 1 & 0 & 0 & 0 & 0 & 0 & 0 & 0 & 0 \\
\hline
M37 & 0 & 0 & 1 & 1 & 0 & 0 & 1 & 1 & 0 & 0 & 0 & 0 & 0 & 0 & 0 & 0 \\
\hline
M38 & 0 & 0 & 0 & 0 & 1 & 1 & 0 & 0 & 0 & 0 & 0 & 0 & 0 & 0 & 0 & 0 \\
\hline
M39 & 0 & 0 & 0 & 0 & 1 & 1 & 0 & 0 & 0 & 0 & 0 & 0 & 0 & 0 & 0 & 0 \\
\hline
M40 & 0 & 0 & 0 & 0 & 1 & 1 & 0 & 0 & 0 & 0 & 0 & 0 & 0 & 0 & 0 & 0 \\
\hline
M41 & 0 & 0 & 0 & 0 & 1 & 1 & 0 & 0 & 0 & 0 & 0 & 0 & 0 & 0 & 0 & 0 \\
\hline
M42 & 0 & 0 & 0 & 0 & 1 & 1 & 0 & 0 & 0 & 0 & 0 & 0 & 0 & 0 & 0 & 0 \\
\hline
M43 & 0 & 0 & 0 & 0 & 1 & 1 & 0 & 0 & 0 & 0 & 0 & 0 & 0 & 0 & 0 & 0 \\
\hline
M44 & 0 & 0 & 0 & 0 & 1 & 1 & 0 & 0 & 0 & 0 & 0 & 0 & 0 & 0 & 0 & 0 \\
\hline
M45 & 0 & 0 & 1 & 1 & 0 & 0 & 0 & 0 & 0 & 0 & 0 & 0 & 0 & 0 & 0 & 0 \\
\hline
M46 & 0 & 0 & 1 & 1 & 0 & 0 & 0 & 0 & 0 & 0 & 0 & 0 & 0 & 0 & 0 & 0 \\
\hline
M47 & 1 & 0 & 0 & 0 & 0 & 0 & 0 & 0 & 1 & 0 & 0 & 0 & 0 & 0 & 0 & 0 \\
\hline
M48 & 0 & 0 & 0 & 0 & 0 & 0 & 0 & 0 & 0 & 1 & 0 & 0 & 0 & 0 & 0 & 0 \\
\hline
M49 & 1 & 1 & 0 & 0 & 0 & 0 & 0 & 0 & 1 & 0 & 0 & 0 & 0 & 0 & 0 & 0 \\
\hline
M50 & 0 & 1 & 0 & 0 & 0 & 0 & 0 & 0 & 0 & 1 & 0 & 0 & 0 & 0 & 0 & 0 \\
\hline
M51 & 0 & 1 & 0 & 0 & 0 & 0 & 0 & 0 & 1 & 1 & 0 & 0 & 0 & 0 & 0 & 0 \\
\hline
\end{tabular}
}
\end{table}

\subsection{Sizing Parameters}
\label{sec:sizing_training}

Table~\ref{tab:training_device_sizing} presents the device sizing parameters for the training-best 8-mode power converter.

\begin{table}[H]
\centering
\caption{Device sizing for the training best 8-mode power converter topology. The circuit contains 11 capacitors (each C = 272.7pF, 370$\mu$m $\times$ 370$\mu$m) and 51 switches. Total area is 1.5mm$^2$, with capacitors occupying 99.84\% and switches occupying 0.16\%.}
\label{tab:training_device_sizing}
\adjustbox{max width=\textwidth}{
\begin{tabular}{|c|c|c|c||c|c|c|c||c|c|c|c|}
\hline
Device & W & L & m & Device & W & L & m & Device & W & L & m \\
\hline
M1 & 10$\mu$ & 180n & 38 & M2 & 10$\mu$ & 180n & 39 & M3 & 10$\mu$ & 180n & 41 \\
\hline
M4 & 10$\mu$ & 180n & 36 & M5 & 10$\mu$ & 180n & 38 & M6 & 10$\mu$ & 180n & 20 \\
\hline
M7 & 10$\mu$ & 180n & 50 & M8 & 10$\mu$ & 180n & 23 & M9 & 10$\mu$ & 180n & 4 \\
\hline
M10 & 10$\mu$ & 180n & 11 & M11 & 10$\mu$ & 180n & 1 & M12 & 10$\mu$ & 180n & 19 \\
\hline
M13 & 10$\mu$ & 180n & 4 & M14 & 10$\mu$ & 180n & 12 & M15 & 10$\mu$ & 180n & 50 \\
\hline
M16 & 10$\mu$ & 180n & 12 & M17 & 10$\mu$ & 180n & 14 & M18 & 10$\mu$ & 180n & 49 \\
\hline
M19 & 10$\mu$ & 180n & 50 & M20 & 10$\mu$ & 180n & 6 & M21 & 10$\mu$ & 180n & 28 \\
\hline
M22 & 10$\mu$ & 180n & 39 & M23 & 10$\mu$ & 180n & 34 & M24 & 10$\mu$ & 180n & 10 \\
\hline
M25 & 10$\mu$ & 180n & 7 & M26 & 10$\mu$ & 180n & 13 & M27 & 10$\mu$ & 180n & 50 \\
\hline
M28 & 10$\mu$ & 180n & 33 & M29 & 10$\mu$ & 180n & 47 & M30 & 10$\mu$ & 180n & 11 \\
\hline
M31 & 10$\mu$ & 180n & 16 & M32 & 10$\mu$ & 180n & 42 & M33 & 10$\mu$ & 180n & 35 \\
\hline
M34 & 10$\mu$ & 180n & 9 & M35 & 10$\mu$ & 180n & 50 & M36 & 10$\mu$ & 180n & 46 \\
\hline
M37 & 10$\mu$ & 180n & 34 & M38 & 10$\mu$ & 180n & 13 & M39 & 10$\mu$ & 180n & 15 \\
\hline
M40 & 10$\mu$ & 180n & 14 & M41 & 10$\mu$ & 180n & 7 & M42 & 10$\mu$ & 180n & 9 \\
\hline
M43 & 10$\mu$ & 180n & 14 & M44 & 10$\mu$ & 180n & 19 & M45 & 10$\mu$ & 180n & 24 \\
\hline
M46 & 10$\mu$ & 180n & 2 & M47 & 10$\mu$ & 180n & 44 & M48 & 10$\mu$ & 180n & 32 \\
\hline
M49 & 10$\mu$ & 180n & 28 & M50 & 10$\mu$ & 180n & 50 & M51 & 10$\mu$ & 180n & 47 \\
\hline
\end{tabular}}
\end{table}

\section{Additional Experimental Results}
\label{sec:appendix_extended}
This section provides additional experimental results to supplement the findings in Section~\ref{sec: Results}.

\subsection{Multi-Run Statistics}
\label{sec:multirun}
To provide statistical context, we conduct three runs for each preference alignment method. Table~\ref{tab:multirun_results} reports the mean and standard deviation across runs.

\begin{table}[H]
\centering
\caption{Multi-run results (mean $\pm$ std) across three runs.}
\begin{threeparttable}
\begin{tabular}{lcccc}
\toprule
Method & Syntax $\uparrow$ & Functional $\uparrow$ & Uniqueness $\uparrow$ & FoM $\uparrow$ \\
\midrule
Pretrain+DPO & 35.30$\pm$0.59 & 1.50$\pm$0.22 & 0.50$\pm$0.08 & 0.253$\pm$0.005 \\
Pretrain+SFT+DPO & 44.67$\pm$0.78 & 37.67$\pm$0.53 & 2.03$\pm$0.50 & 0.264$\pm$0.006 \\
Pretrain+PPO & 34.80$\pm$3.74 & 1.37$\pm$0.42 & 0.50$\pm$0.16 & 0.244$\pm$0.008 \\
Pretrain+SFT+PPO & 47.63$\pm$0.64 & 39.70$\pm$1.02 & 2.77$\pm$0.52 & 0.268$\pm$0.012 \\
\textbf{Pretrain+Evo} & \textbf{90.47$\pm$0.40} & \textbf{86.20$\pm$1.07} & \textbf{30.73$\pm$1.37} & \textbf{0.320$\pm$0.004} \\
\bottomrule
\end{tabular}
\begin{tablenotes}\scriptsize
\item All methods generate 1,000 topologies per run. Training set best FoM = 0.263.
\end{tablenotes}
\end{threeparttable}
\label{tab:multirun_results}
\end{table}

The results confirm that evolutionary finetuning consistently outperforms all baselines across runs. All three evolutionary finetuning runs discover topologies exceeding the training set maximum FoM of 0.263.

Table~\ref{tab:ablation_multirun} provides multi-run statistics for the ablation study presented in Section~\ref{sec: Results}.

\begin{table}[H]
\centering
\caption{Ablation study multi-run results (mean $\pm$ std) across three runs.}
\begin{threeparttable}
\begin{tabular}{lcccc}
\toprule
Method & Syntax $\uparrow$ & Functional $\uparrow$ & Uniqueness $\uparrow$ & FoM $\uparrow$ \\
\midrule
w/o Selection & 77.67$\pm$1.27 & 72.00$\pm$1.20 & 18.33$\pm$0.74 & 0.260$\pm$0.002 \\
w/o Generation & 86.80$\pm$0.70 & 82.23$\pm$0.29 & 0.67$\pm$0.12 & 0.263$\pm$0.001 \\
\textbf{PowerGenie} & \textbf{90.47$\pm$0.40} & \textbf{86.20$\pm$1.07} & \textbf{30.73$\pm$1.37} & \textbf{0.320$\pm$0.004} \\
\bottomrule
\end{tabular}
\begin{tablenotes}\scriptsize
\item All methods generate 1,000 topologies per run. Training set best FoM = 0.263.
\end{tablenotes}
\end{threeparttable}
\label{tab:ablation_multirun}
\end{table}

The multi-run statistics reinforce the ablation findings: removing selection reduces FoM to $0.260\pm0.002$, below the training set best of 0.263, while removing generation yields FoM matching but never exceeding this baseline ($0.263\pm0.001$). Both components are essential—selection drives exploitation toward high-quality designs, while generation enables exploration of novel topologies beyond the training distribution.

\subsection{FoM Distribution of Discovered Circuits}
\label{sec:fom_distribution}

Beyond pass rates, we analyze the FoM distribution of functionally valid circuits discovered by each method. Table~\ref{tab:fom_stats} reports the minimum, maximum, mean, and standard deviation of FoM values among all valid circuits generated in the best run.

\begin{table}[H]
\centering
\caption{FoM statistics of functionally valid circuits discovered by each method.}
\begin{threeparttable}
\begin{tabular}{lcccc}
\toprule
Method & Min $\uparrow$ & Max $\uparrow$ & Mean $\uparrow$ & Std $\downarrow$ \\
\midrule
Pretrain & $-$0.185 & 0.246 & 0.087 & 0.113 \\
Pretrain+SFT & $-$0.351 & 0.263 & 0.052 & 0.138 \\
Pretrain+DPO & $-$0.224 & 0.256 & 0.089 & 0.139 \\
Pretrain+SFT+DPO & $-$0.351 & 0.272 & 0.097 & 0.127 \\
Pretrain+PPO & $-$0.224 & 0.253 & 0.055 & 0.139 \\
Pretrain+SFT+PPO & $-$0.351 & 0.284 & 0.052 & 0.137 \\
\midrule
w/o Selection & $-$0.351 & 0.261 & 0.038 & 0.136 \\
w/o Generation & $-$0.026 & 0.263 & 0.193 & 0.052 \\
\midrule
\textbf{PowerGenie} & \textbf{0.104} & \textbf{0.323} & \textbf{0.222} & \textbf{0.033} \\
\bottomrule
\end{tabular}
\begin{tablenotes}\scriptsize
\item Statistics computed over all functionally valid 8-mode circuits from 1,000 generated topologies. Training set best FoM = 0.263.
\end{tablenotes}
\end{threeparttable}
\label{tab:fom_stats}
\end{table}

PowerGenie exhibits a fundamentally different FoM distribution compared to all baselines:

\paragraph{Higher Floor and Ceiling.} All baseline methods produce circuits with negative FoM values (as low as $-$0.351), indicating poor efficiency or excessive component counts. PowerGenie's minimum FoM (0.104) is strictly positive, ensuring no low-quality outputs. Simultaneously, PowerGenie achieves the highest maximum FoM (0.323), substantially surpassing the training set best (0.263) and the best baseline (SFT+PPO at 0.284).

\paragraph{Higher Mean.} PowerGenie's mean FoM (0.222) is 2--6$\times$ higher than all baselines (0.038--0.097), indicating that evolutionary finetuning shifts the entire output distribution toward high-quality designs rather than relying on tail samples.

\paragraph{Lower Variance.} PowerGenie's standard deviation (0.033) is 3--4$\times$ smaller than baselines ($\sim$0.13), demonstrating consistent generation of high-quality circuits. This tight distribution, combined with the elevated floor, ensures reliable performance in practice.



\end{document}